\documentclass[11pt]{article}

\usepackage[preprint]{acl}
\usepackage{booktabs}
\usepackage{times}
\usepackage{latexsym}
\usepackage{amsmath}
\usepackage{multirow}
\usepackage{amssymb} 
\usepackage[T1]{fontenc}

\usepackage[utf8]{inputenc}

\usepackage{microtype}
\usepackage{pifont}
\usepackage{inconsolata}

\usepackage{amsthm}

\newtheorem{proposition}{Proposition}

\usepackage{listings}
\usepackage[framemethod=TikZ]{mdframed} 
\newmdenv[
  roundcorner=6pt,
  linecolor=black,
  linewidth=0.8pt,
  backgroundcolor=white,
  innerleftmargin=6pt, innerrightmargin=6pt,
  innertopmargin=6pt, innerbottommargin=0pt
]{promptframe}

\usetikzlibrary{calc}

\definecolor{RLBlue}{HTML}{2F6DB3}
\definecolor{BaseGray}{HTML}{D9DDE3}
\definecolor{BaseEdge}{HTML}{7F8A99}
\definecolor{AccentGreen}{HTML}{1E8E5A}
\definecolor{TitleText}{HTML}{1F2937}
\definecolor{BodyText}{HTML}{374151}
\definecolor{GuideGray}{HTML}{E5E7EB}

\usepackage{enumitem}

\usepackage{graphicx}
\newcommand{\cmark}{\ding{51}} 
\newcommand{\xmark}{\ding{55}} 

\usepackage{xspace}
\newcommand{\ourmethod}{InSight-doc\xspace}

\usepackage{makecell}

\usepackage{colortbl}
\makeatletter

\newcommand*{\@rowstyle}{}

\newcommand*{\rowstyle}[1]{
  \gdef\@rowstyle{#1}%
  \@rowstyle\ignorespaces%
}

\newcolumntype{=}{
  >{\gdef\@rowstyle{}}%
}

\newcolumntype{+}{
  >{\@rowstyle}%
}

\makeatother

\title{InSight-\emph{doc}: Agentic Visual Perception for Long-Document Understanding}

\author{%
    Kaican Li$^{1\ast}$ \qquad Weiyan Xie$^{1\ast}$ \qquad Lewei Yao$^2$ \qquad Jiannan Wu$^2$ \\ \textbf{Lanqing Hong$^2$ \qquad Yongxiang Huang$^2$ \qquad Nevin L. Zhang$^1$} \\ $^1$The Hong Kong University of Science and Technology \qquad $^2$Huawei \\
     \small{
       \textbf{Correspondence:} \texttt{\{klibf, wxieai, lzhang\}@cse.ust.hk} \qquad $^\ast$Equal contribution
     }
}

\begin{document}
\maketitle
\begin{abstract}
Long-document understanding often requires reasoning over many visually rich pages, making inference costly and prone to context rot.
In this work, we propose \emph{InSight-doc}, an agentic visual perception framework that treats visual resolution as an adaptive reasoning-time resource. InSight-doc starts from low resolution and selectively zooms into high-resolution regions for finer evidence, without relying on any external retriever. To train such an agent, we construct an active-perception corpus of 17.9K high-quality SFT examples with region-level zoom-in trajectories, accompanied by 19.2K hard RL examples. Through SFT+RL, InSight-doc-8B improves the baseline by 4.3--16.4 accuracy points over document VQA benchmarks.
On long documents, it reduces hallucination by more than 40\% and inference latency by 41\%--68\% while maintaining an accuracy lead.
Our code, datasets, and model are released at {\fontsize{9.8}{11}\selectfont \url{https://github.com/m-Just/InSight-doc}}.
\end{abstract}

\section{Introduction}
\label{sec:intro}

Frontier AI systems like multimodal large language models (MLLMs) are primarily based on transformers~\citep{vaswani2017attention} which use an attention mechanism to model complex data.
The attention mechanism, albeit powerful, requires the model to attend to all $N$ tokens in the context window.
This not only leads to an $O(N)$ space cost and an $O(N^2)$ time cost, but also underlies a phenomenon known as {\em context rot}~\citep{hong2025context} where models get worse dramatically as prompts get longer.
It is believed to be caused by diluted attention over long context and scarcity of genuinely long-context training data~\citep{anthropic2025effective}.
As a result, \emph{how to address long-context tasks \textbf{reliably and efficiently} has become a central question for AI research.}

\begin{figure}
    \centering
    \includegraphics[width=\columnwidth]{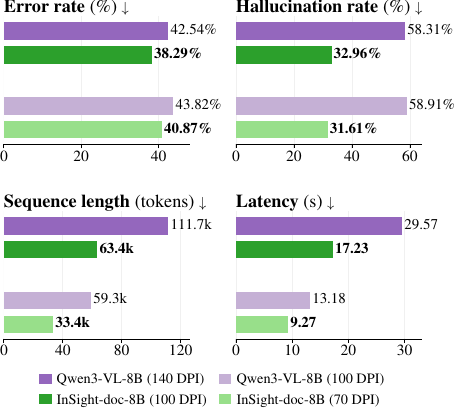}
    \caption{\textbf{InSight-doc substantially reduces hallucination, sequence length, and latency while improving accuracy on long-document VQA.} Hallucination rate is measured on unanswerable questions as the rate of non-abstaining answers. More details in Appendix~\ref{app:longdoc_efficiency_figure}.}
    \label{fig:longdoc_efficiency}
    \vspace{-1em}
\end{figure}

\begin{figure*}[t]
    \centering
       \includegraphics[width=\linewidth]{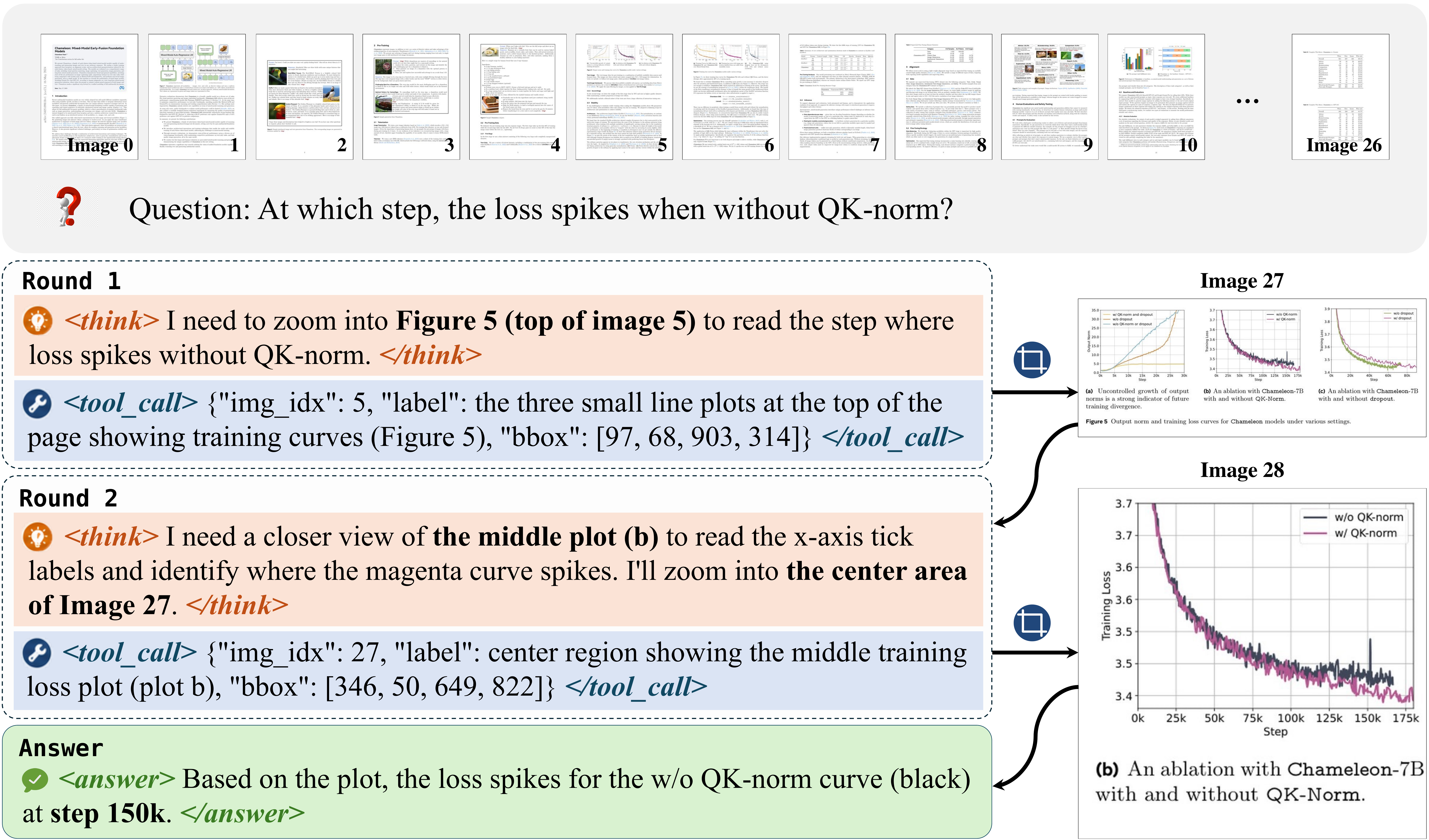}    
\caption{\textbf{Illustration of \ourmethod{} on a long-document VQA example.} Starting from a low-resolution overview, \ourmethod{} performs an interleaved multimodal chain-of-thought: each round emits a thought (\texttt{<think>}) and a zoom-in tool call (\texttt{<tool\_call>}) specifying \texttt{img\_idx}, \texttt{label}, and \texttt{bbox}; the cropped high-resolution region is then appended as visual evidence. After multiple rounds of active perception, the model produces the final \texttt{<answer>}. Images shown above are not in the exact scales seen by the model. More examples can be found in Appendix~\ref{app:example_appendix}.}
    \label{fig:teaser}
    \vspace{-0.5em}
\end{figure*}

Multi-page document understanding is a typical long-context task that is vital to many real-world applications of MLLMs.
The task usually involves answering questions over lengthy, visually rich documents such as research papers and financial reports~\citep{tito2023hierarchical, van2023document, ma2024mmlongbench, deng2025longdocurl}.
Traditional parsing-based methods use a multi-stage pipeline that involves layout detection, OCR, reading-order reconstruction, etc.~\citep{wang2024mineru,feng2025dolphin,dong2026qianfan}, which are prone to cascading errors and often do not generalize very well to visually-rich documents with complex layouts.
In contrast, \emph{vision-based} approaches utilizing MLLMs end-to-end~\citep{bai2025qwen2.5vl, zhu2025internvl3, hu2025mplug2} offer a more general, reliable solution by treating each document page as a normal image, preserving visual integrity and sidestepping fragile intermediate steps.

By default, documents are fed into MLLMs at a high resolution to minimize information loss, but this creates two issues for long documents: \emph{slow inference} due to quadratic computation cost, and \emph{degraded performance} due to context rot.
Apart from improving the compression rate of vision encoders~\citep[e.g.,][]{wei2025deepseek}, some \emph{coarse-to-fine} approaches that start from a downsampled document overview have been proposed recently.
\citet{xu2025cogdoc} propose a two-pass approach that first ``fast-reads'' a document under low resolution to identify relevant pages, and then shows the pages under high resolution in a new context for ``focused thinking''.
Doc-V$^\star$~\citep{zheng2026docvstar} extends the idea under the ReAct framework~\citep{yao2022react}, allowing the model to query external retrievers for relevant pages or pull pages directly by indices in a multi-turn interleaved fashion.


In this work, we propose {\textbf{\ourmethod}}, a fully \emph{end-to-end}, \emph{retriever-free}, agentic framework for reliable and efficient long document understanding.
\ourmethod moves beyond page retrieval into the ``thinking with images'' paradigm~\citep{openai2025o3} where tools like image cropping and zooming are used to enhance chain-of-thought (CoT) reasoning~\citep{su2025pixel,zheng2025deepeyes,lai2025mini,fan2025grit,zhang2025chain}.
Rather than passively processing documents at a fixed resolution or relying on external retrievers, \ourmethod empowers the model to \emph{dynamically seek, acquire, and integrate multi-scale visual evidence}.
As shown in Figure~\ref{fig:teaser}, \ourmethod begins with a low-resolution\footnote{E.g., 50 DPI where most text is still readable.} view of the full document and uses the zoom-in tool to look for the desired information.
The resulting visual evidence is directly appended to the reasoning chain for closer inspection.
This coarse-to-fine workflow mimics how humans normally read documents: \emph{start from a high-level overview and only jump into details if necessary}.
In this way, \ourmethod significantly reduces context pressure and allows the model to ``focus'' its attention on what is most likely relevant.

Our main contributions are as follows:

\begin{itemize}
    \item We propose InSight-doc, an \emph{end-to-end} agentic framework that adaptively acquires visual evidence during multi-round reasoning over long documents and visually-rich images.


    \item We construct a high-quality, diverse training corpus with 17.9K \emph{multi-hop} zoom-in SFT trajectories and 19.2K hard RL examples.

    \item Extensive experiments show that InSight-doc significantly improves the \emph{accuracy-efficiency Pareto frontier} over the baseline.
\end{itemize}

\section{Related Work}

\subsection{Document Understanding}

While MLLMs have achieved near-saturated performance on single-page benchmarks~\cite{mathew2021docvqa, mathew2022infographicvqa}, long-document understanding~\cite{tito2023hierarchical, ma2024mmlongbench, deng2025longdocurl} remains challenging. Existing methods can be categorized into three paradigms.

\noindent
\textbf{(1) End-to-end methods.} 
Most advanced models~\cite{bai2025qwen2.5vl, Qwen3-VL, zhu2025internvl3, wang2025internvl3.5, team2025kimivl, yang2025keyevl1.5, guo2025seed1.5vl} directly feed all high-resolution pages into MLLMs, producing a large number of visual tokens. Although this preserves fine-grained visual information, many pages may contain task-irrelevant content, leading to considerable computational overhead and limited scalability to extremely long documents.
\citet{xiong2025docr1,yan2026docseeker} identify/prioritize evidence within a fixed visual input and still do not allow the model to actively acquire new visual evidence.

\noindent
\textbf{(2) Visual retrieval-based methods.} 
To reduce visual tokens, recent studies have explored visual retrieval-augmented generation (RAG) methods~\cite{yu2024visrag,cho2024m3docrag,faysse2024colpali,chen2024sv,tanaka2025vdocrag,ma2024dse, wang2026vrag, wu2025doc}. These methods use an external retriever to align textual queries with document images and select the top-$k$ pages by embedding similarity. While efficient, this decoupled pipeline can be sensitive to the choice of $k$ and may struggle when crucial evidence lies outside the retrieved subset.

\noindent
\textbf{(3) Coarse-to-fine methods.} 
To balance scalability and fidelity, recent studies~\cite{xu2025cogdoc, zheng2026docvstar} adopt a coarse-to-fine strategy: they first scan the full document at low resolution to identify relevant pages, and then inspect them at high resolution. 
\ourmethod follows this line of work but differs in two key aspects. First, \ourmethod performs region cropping and zooming rather than page fetching.
This enables \textit{sub-page}, \textit{region-level} grounding within a unified multimodal chain-of-thought while reducing token overhead.
Second, its \emph{retriever-free}, interleaved region-text reasoning naturally supports multi-hop queries and multi-turn self-correction.
While Doc-V$^\star$~\citep{zheng2026docvstar} also supports similar interleaved reasoning, it primarily relies on an external retriever to fetch pages.
This introduces extra indexing/retrieval costs and errors from the retriever side.

\subsection{Visual Search}

Visual search requires models to actively perceive fine-grained regions of interest and perform region-text interleaved reasoning, rather than interpreting the whole image at a fixed scale. Research in this field has evolved from external detectors and scripted zoom workflows~\citep{vstar,shen2024zoomeye,li2025dyfo} to ``think with images'' MLLMs that internalize zoom/crop operations through reinforcement learning or curated trajectories~\citep{openai2025o3,zheng2025deepeyes,su2025pixel,lai2025mini,li2026insighto}. We discuss these methods in detail in Appendix~\ref{app:related_work_vs}.

Despite this progress, existing visual search systems are mainly evaluated on natural photographs or single-page, text-rich images~\citep{vstar,mme_rw,wang2025traceable,lai2025mini, wang2025simple-o3}, where models usually locate a single salient region within one image. Multi-page documents, however, require searching across a \emph{collection} of pages, with evidence often scattered across distant and non-adjacent pages.
MLLMs that can jointly reason and dynamically acquire \emph{multi-region} visual evidence under a strict token budget remain largely underexplored.

\section{\ourmethod}
\label{sec:method}

\ourmethod targets \textbf{long-document VQA} where given a document \(\mathcal{D} = \{p_i\}_{i=1}^N\) of \(N\) pages (as images), it answers a user query about the document.
The answer must be well supported by the document; otherwise, it should clearly indicate that the query is unanswerable.



The basic approach is to feed the document and the query into an instruction-tuned MLLM, which would typically generate a chain of thought (CoT) carrying its reasoning process and then provide its answer within a single turn~\citep{wei2022chain,kojima2022large}.
This approach implicitly assumes that the given images are fixed.
\textbf{\ourmethod} drops this assumption by enabling the MLLM to zoom into the images inside a reasoning-action loop~\citep{yao2022react}.
Crucially, the zoom-in operation is carried out on a high-resolution version of the document (if available), so we can start with a low-resolution version of the document as the initial input, and the model is  still able to \emph{fully} recover the lost information later on.
This enables more aggressive downsampling (and thus greatly shortening the context) without needing to worry much about information loss.


\subsection{Implementation}
\label{sec:impl}
Formally, \ourmethod initializes its visual context space \(\mathcal{I}_{\text{ctx}}^{(0)}\) (i.e., the set of images available to the model at reasoning step \(t=0\)) with the initial page images $\{\tilde{I}^{(0)}_k\}_{k=1}^N$.
These images are usually downsampled from the original high-resolution images $\{I_k\}_{k=1}^N$ through $\tilde{I}^{(0)}_k = \texttt{resize}(I_k, r \cdot \texttt{size}(I_k))$
where $r \leq 1$ is the initial resize factor.
Then, based on this initial input, the model decides (via CoT) whether it needs to zoom into a certain region for a clearer view or more information.

At any step \(t\), if the model decides to zoom in, it emits a tool call: $\texttt{zoom\_in}(k, d, b \mid \mathcal{I}_{\text{ctx}}^{(t-1)})$
where \(k\) is the index of the target image within the current visual context \(\mathcal{I}_{\text{ctx}}^{(t-1)}\), \(d\) is a free-form natural language description of the region of interest, and $b$ the bounding box of the region.
If the tool call is valid, the requested region is then cropped from the corresponding high-resolution source $I_{s(k)}$, yielding $I^{(t)}_\text{crop} = \texttt{crop}(I_{s(k)}, b, r)$
which is then (optionally) resized to an appropriate scale by
\begin{equation}
    \tilde{I}^{(t)}_\text{crop} = \texttt{resize}(I^{(t)}_\text{crop}, c \cdot r \cdot \texttt{size}(I^{(t)}_\text{crop})),
\end{equation}
where $c > 1$ can be seen as the zoom factor with respect to the low-resolution image $\tilde{I}^{(t)}_k$.
For recursive zoom-ins, the resize factor associated with the new crop is updated as $r \gets c \cdot r$.
The crop at time $t$ is appended to the visual context space: $\mathcal{I}_{\text{ctx}}^{(t)} = \mathcal{I}_{\text{ctx}}^{(t-1)} \cup \{\tilde{I}^{(t)}_\text{crop}\}.$
This repeats until the model decides to answer or the tool limit is reached.



\subsection{Inference Cost Analysis}

Although image resizing reduces the initial visual-token budget, our approach may introduce additional overhead with longer multi-turn interactions.
To characterize this tradeoff, we analyze both the total sequence length and inference latency relative to a single-turn, no-resize baseline.

Let $P$ and $R$ denote the numbers of input and generated tokens, respectively.
For long-context, full-attention inference, we approximate latency by
\begin{equation}
\label{eq:single_turn_cost}
T(P,R)=\alpha P^2+\beta R(2P+R),
\end{equation}
where $\alpha,\beta>0$ capture prefill and decoding costs.
Fixed and lower-order terms are omitted.
Minor overheads such as image encoding are also omitted.

Let $P_0$ and $R_0$ be the input- and generated-token counts of the
single-turn, no-resize baseline. For an image
side-length resize ratio $r\in(0,1]$, let $n(r)$ be the number of
zoom-in tool calls.
Assuming that visual tokens dominate the baseline input, we model the total input and output lengths of our approach as $P_r=x(r)P_0$ and
$R_r=y(r)R_0$, where
\begin{equation}
\label{eq:normalized_token_model}
x(r)=r^2+\delta n(r),
\quad
y(r)=1+\lambda n(r).
\end{equation}
The parameters $\delta$ and $\lambda$ denote the
input- and output-token costs of one tool call relative to $P_0$ and $R_0$, respectively.
We further define the baseline prompt-to-response ratio
$\kappa=P_0/R_0$ and prefill-to-decoding coefficient ratio $\gamma=\alpha/\beta$.

\begin{proposition}[Relative sequence length]
\label{prop:relative_sequence_length}
Let $S_0=P_0+R_0$ and $S_r=P_r+R_r$ be the total sequence lengths of
the baseline and our approach, respectively.
The relative sequence length satisfies
\begin{equation}
\label{eq:relative_sequence_length}
S_r/S_0
\leq x(r) + \kappa^{-1}y(r).
\end{equation}
\end{proposition}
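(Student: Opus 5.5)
The plan is a direct computation from the token model in \eqref{eq:normalized_token_model}. First I substitute $P_r=x(r)P_0$ and $R_r=y(r)R_0$ into $S_r=P_r+R_r$, and I divide by $S_0=P_0+R_0$. This gives the exact identity
\begin{equation*}
\frac{S_r}{S_0}=\frac{x(r)P_0+y(r)R_0}{P_0+R_0}
=\frac{x(r)\kappa+y(r)}{\kappa+1},
\end{equation*}
where the second form follows from dividing the numerator and denominator by $R_0$ and using $\kappa=P_0/R_0$.

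Second, I bound this quantity by splitting the numerator into its two terms. Since $R_0>0$, we have $P_0+R_0\ge P_0$, so $x(r)P_0/(P_0+R_0)\le x(r)$. Likewise $y(r)R_0/(P_0+R_0)\le y(r)R_0/P_0=\kappa^{-1}y(r)$. Both steps require $x(r),y(r)\ge 0$. This holds because $r^2\ge 0$, $n(r)\ge 0$, and the per-call costs satisfy $\delta,\lambda\ge 0$. Adding the two bounds gives $S_r/S_0\le x(r)+\kappa^{-1}y(r)$, which is the claim.

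There is no real obstacle. The only point that needs explicit care is stating the sign conventions: $P_0,R_0>0$ and $\delta,\lambda,n(r)\ge0$. I should also note that the bound is loose by a factor of at most $(1+\kappa^{-1})$ relative to the exact convex-combination expression. It becomes nearly tight in the regime $\kappa\gg1$ of interest, where visual prompt tokens dominate.
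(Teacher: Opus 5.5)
Your proposal is correct and follows essentially the same route as the paper: both derive the exact identity $S_r/S_0=(\kappa x(r)+y(r))/(\kappa+1)$ and then replace the denominator $\kappa+1$ by $\kappa$ (equivalently $P_0+R_0$ by $P_0$), using $x(r),y(r)\geq 0$. Your term-by-term split is just this same step written out for each summand.
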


The proof is straightforward (see Appendix~\ref{app:proof-relative-seq-len}).
For long-document VQA tasks, consider
$r\in[0.25,0.50]$, $n(r)\in[1,3]$, $\delta\in[0.01,0.05]$,
$\lambda\in[0.10,0.50]$, and $\kappa\in[50,200]$. Under these ranges,
the upper bound in Proposition~\ref{prop:relative_sequence_length} ranges from
$7.8\%$ to $45.0\%$ of the baseline sequence length.

\begin{proposition}[Relative latency]
\label{prop:relative_latency}
Let $T_0$ and $T_r$ denote the prefill and decoding latencies of the baseline and our approach, respectively, where $T_0=T(P_0,R_0)$. With prefix caching and $\beta\geq\alpha$, the relative latency satisfies
\begin{equation}
\label{eq:relative_latency_bound}
T_r/T_0
\leq
w_{\mathrm{p}}x(r)^2
+w_{\mathrm{c}}x(r)y(r)
+w_{\mathrm{g}}y(r)^2,
\end{equation}
where the weights are defined by
\begin{equation}
(w_{\mathrm{p}}, w_{\mathrm{c}}, w_{\mathrm{g}})
=
\frac{(\gamma\kappa^2, 2\kappa, 1)}{\gamma\kappa^2+2\kappa+1}.
\end{equation}
\end{proposition}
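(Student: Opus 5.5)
The strategy is to bound the multi-turn latency $T_r$ by the latency of one hypothetical single-turn call with the aggregated lengths $P_r$ and $R_r$, namely $T(P_r,R_r)$. We then normalize by $T_0=T(P_0,R_0)$ and rewrite everything in terms of $x(r)$, $y(r)$, $\kappa$ and $\gamma$.

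\textbf{Step 1: the normalization identity.} Substituting $P_r=xP_0$ and $R_r=yR_0$ into \eqref{eq:single_turn_cost} gives
\begin{equation*}
T(P_r,R_r)=\alpha x^2P_0^2+2\beta xyP_0R_0+\beta y^2R_0^2 .
\end{equation*}
Dividing numerator and denominator by $\beta R_0^2$, and using $P_0=\kappa R_0$ and $\alpha=\gamma\beta$, we obtain
\begin{equation*}
\frac{T(P_r,R_r)}{T_0}=\frac{\gamma\kappa^2x^2+2\kappa xy+y^2}{\gamma\kappa^2+2\kappa+1}.
\end{equation*}
This is exactly $w_{\mathrm p}x^2+w_{\mathrm c}xy+w_{\mathrm g}y^2$. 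So it suffices to prove $T_r\le T(P_r,R_r)$.

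\textbf{Step 2: bounding the multi-turn cost.} Split the interaction into turns $j=1,\dots,m$ with $m=n(r)+1$. In turn $j$, let $p_j$ be the new prompt tokens to prefill and $g_j$ the tokens generated. Write $C_j$ for the cached prefix length before turn $j$, so $C_1=0$ and $C_{j+1}=C_j+p_j+g_j$. Then $\sum_j p_j=P_r$ and $\sum_j g_j=R_r$.

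With prefix caching, prefilling the $p_j$ new tokens attends to the cache plus themselves, costing about $\alpha\bigl((C_j+p_j)^2-C_j^2\bigr)$. Decoding $g_j$ tokens on a context starting at length $C_j+p_j$ costs $\beta\bigl((C_j+p_j+g_j)^2-(C_j+p_j)^2\bigr)$, which is consistent with $\beta R(2P+R)=\beta\bigl((P+R)^2-P^2\bigr)$.

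Every token position therefore contributes a telescoping increment of $S^2$, weighted by $\alpha$ if the token was prefilled and by $\beta$ if it was generated. Summing over turns, $T_r$ is the sum of these increments along the whole sequence of length $S_r=P_r+R_r$. The single-turn quantity $T(P_r,R_r)$ is the same sum after reordering so that all prefilled tokens come first. Since $\beta\ge\alpha$ and the increments $(s+1)^2-s^2$ grow with position, moving $\beta$-weighted (generated) tokens toward later positions can only increase the total. This is an exchange/rearrangement argument: swap each adjacent pair consisting of a generated token followed by a prompt token, and check that each swap does not decrease the cost. This yields $T_r\le T(P_r,R_r)$.

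\textbf{Main obstacle.} The hard step is making the exchange argument precise in the continuous quadratic model. I would first verify the swap inequality for two adjacent blocks of sizes $a$ (generated, weight $\beta$) and $b$ (prompt, weight $\alpha$) starting at position $s$. The difference between the two orderings is $2ab(\beta-\alpha)\ge0$. This is exactly where the hypothesis $\beta\ge\alpha$ enters. The global claim then follows by induction on the number of such inversions.
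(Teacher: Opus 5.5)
Your proposal is correct and follows essentially the same route as the paper: you reduce to $T_r\le T(P_r,R_r)$ and then normalize by $\beta R_0^2$ to obtain the weighted form. The paper writes down the exact identity $T_r=T(P_r,R_r)-2(\beta-\alpha)\sum_{i<j}G_iU_j$ in one step; your adjacent-block swaps, each contributing $2ab(\beta-\alpha)$, sum to exactly this identity, and your explicit cached-prefix cost model makes the paper's pairwise accounting more transparent.
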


The proof is provided in Appendix~\ref{app:proof-relative-latency}.
Consider the representative values $\gamma=10^{-2}$, $\kappa=100$, $\delta=0.05$, and $\lambda=0.5$.
Table~\ref{tab:relative_latency} reports the resulting upper bounds.
For example, at $r=0.35$ with two tool calls (i.e., $n=2$), the resulting latency is bounded by ${\sim}32.5\%$ of the no-resize baseline.

\begin{table}[t]
\centering
\setlength{\tabcolsep}{7.4pt}
\fontsize{9}{11}\selectfont
\caption{Relative-latency upper bound for representative parameters.
Here, $n$ is the number of zoom-in tool calls.}
\label{tab:relative_latency}
\begin{tabular}{cccccc}
\toprule
$r$ & $n=0$ & $n=1$ & $n=2$ & $n=3$ & $n=4$ \\
\midrule
$0.25$ & $0.046$ & $0.124$ & $0.238$ & $0.389$ & $0.576$ \\
$0.35$ & $0.090$ & $0.189$ & $0.325$ & $0.498$ & $0.707$ \\
$0.50$ & $0.190$ & $0.336$ & $0.519$ & $0.738$ & $0.994$ \\
\bottomrule
\end{tabular}
\end{table}

\section{Data Construction}
\label{sec:data}

To train our model to reason from low-resolution inputs and iteratively decide \emph{where to zoom in}, we curate a large-scale corpus of document QA data that is simultaneously \textbf{multi-source}, \textbf{multi-page}, \textbf{multi-hop}, while covering \textbf{multiple question types}, paired with explicit zoom-in chain-of-thought (CoT) trajectories. Figure~\ref{fig:data_create} summarizes the full pipeline: a three-stage filtering and CoT-construction process (top) that routes items either to SFT or to RL, and an \textbf{InSight-o3}-based~\citep{li2026insighto} two-agent trajectory generator (bottom) whose output is merged into a single flat multimodal CoT used as the imitation target.

\begin{figure*}[t]
    \centering
     \includegraphics[width=0.99\linewidth]{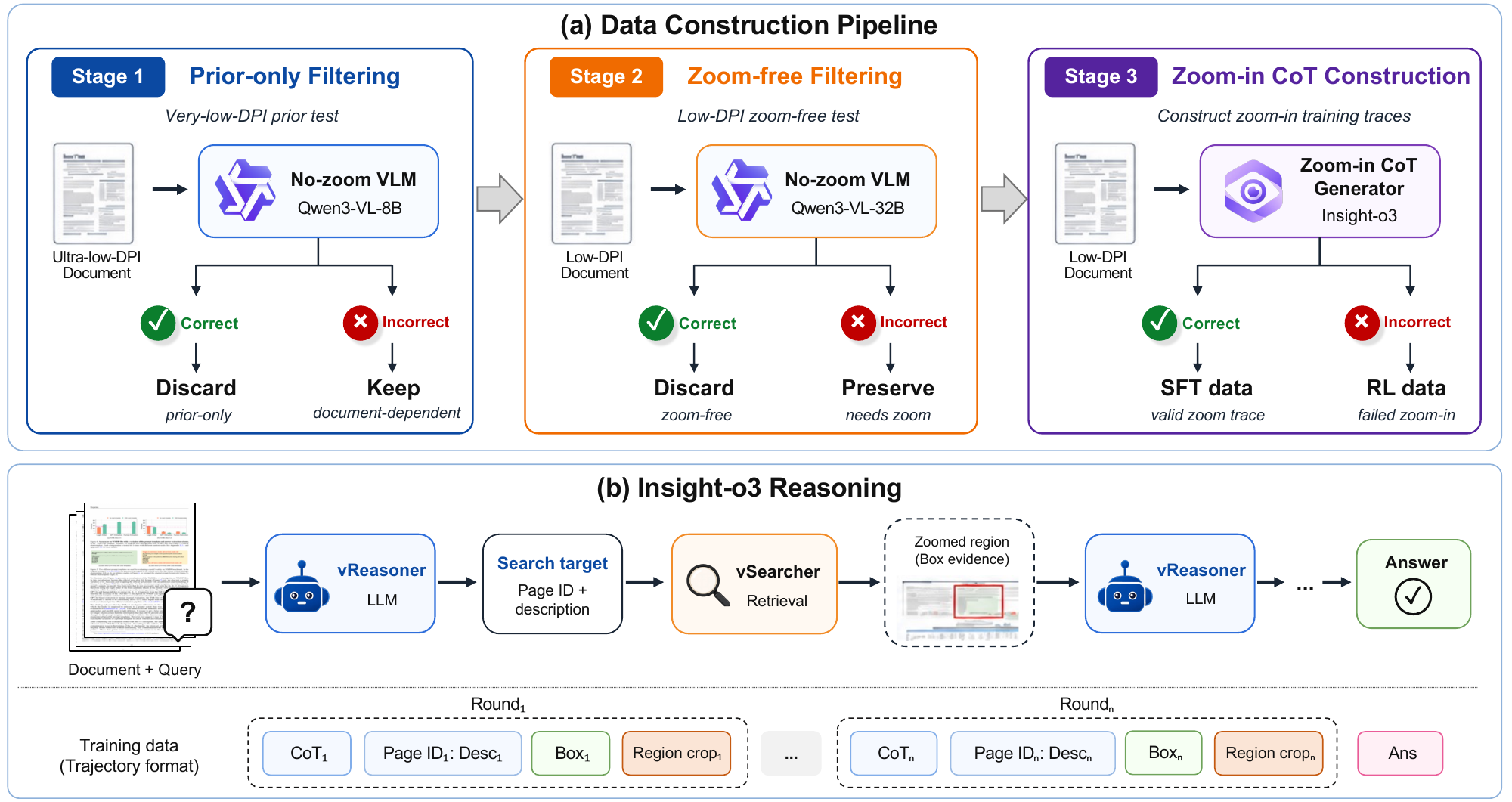}
   \caption{
\textbf{Overview of data construction pipeline.}
The \textbf{top panel} shows a three-stage filtering and CoT construction process: Stage~1 discards questions answerable without the document, Stage~2 discards questions already answerable at low DPI without zoom, and Stage~3 uses InSight-o3 to construct zoom-in CoTs for the remaining items, routing successes to SFT and failures to RL.
The \textbf{bottom panel} illustrates the InSight-o3 two-agent trajectory, where the vReasoner and vSearcher iteratively produce reasoning steps, zoom-in requests, bounding boxes, and crops, which are then merged into a single flat multimodal CoT used as the imitation target for \ourmethod.
}
    \label{fig:data_create}
\end{figure*}

\subsection{QA Generation}
\label{sec:qagen}

\paragraph{Document sources \& single-hop QAs.}
We curate documents from six complementary sources: \textbf{arXiv}, \textbf{DUDE}~\citep{van2023document}, \textbf{DocVQA}~\citep{mathew2021docvqa}, \textbf{InfographicVQA}~\citep{mathew2022infographicvqa}, \textbf{Paper2Poster}~\citep{pang2026paperposter}, and \textbf{MapTab}~\citep{shang2026maptab}.
For arXiv, we use \emph{MinerU}~\citep{wang2024mineru} to extract visual elements from PDFs and prompt Gemini 3.1 Flash-Lite~\citep{google2025gemini31flashlite} to generate enriched descriptions of these visuals and synthesize QA pairs grounded in both the visuals and their descriptions. For the remaining sources, we reuse curated subsets of their native QA pairs.

\paragraph{Multi-page and multi-hop construction.} 
To encourage long-document, multi-region reasoning, we build longer-context and multi-hop variants from single-hop QAs. For \emph{multi-page} construction, we either expand a DocVQA question to its full source document or merge the evidence-bearing document with other documents from the same source while preserving page order. For \emph{multi-hop} construction, we use two strategies: (1) \textbf{visual-grounded synthesis} on arXiv, which samples \(k\in\{2,3,4\}\) visuals from the same paper to generate comparison or aggregation questions, and (2) \textbf{single-hop QA merging}, which combines multiple single-hop QAs into one compound item requiring spatially separated evidence. Details are provided in Appendix~\ref{app:data_details}.


\subsection{Filtering and Zoom-in CoT Generation}
\label{sec:filter_cot}

Not every QA teaches active perception. We therefore apply a three-stage cascade (Figure~\ref{fig:data_create}, top) to filter and partition the data:

\begin{enumerate}
    \item \textbf{Prior-only filtering.} Questions answered correctly at ultra-low resolution (20 DPI) by Qwen3-VL-8B~\citep{Qwen3-VL} are deemed document-independent and discarded.
    \item \textbf{Zoom-free filtering.} Surviving QAs are rendered at DPI \(\in\{50,70,100\}\). Items answered correctly by Qwen3-VL-32B without zoom are already legible and thus removed.
    \item \textbf{Zoom-in CoT construction.} For the remaining QAs, we leverage the \textbf{InSight-o3}~\citep{li2026insighto} pipeline to synthesize explicit active-perception trajectories, iterating until a final answer is produced.
\end{enumerate}

Finally, items whose CoT yields the correct answer, judged by GPT-5-nano~\citep{openai2026gpt5systemcard}, become \textbf{SFT} data; the rest are reserved for \textbf{RL}.

\paragraph{Multimodal CoT.} To build the CoT trajectories, InSight-o3 employs a two-agent setup: a \emph{vReasoner} maintains the reasoning state and emits structured requests \(\langle\text{PageID}_i{:}\text{Desc}_i\rangle\) identifying regions that warrant a closer look, while a \emph{vSearcher} localizes the described region on page \(\text{PageID}_i\), returning a bounding box \(\langle\text{Box}_i\rangle\); the corresponding image patch within \(\langle\text{Box}_i\rangle\) on \(\text{PageID}_i\) is then cropped as \(\text{crop}_i\) and appended to the context. We merge the two-agent trajectory into one flat multimodal sequence (lower track of Figure~\ref{fig:data_create}), which serves directly as the SFT target.
This allows the model to jointly learn \emph{when} to zoom in, \emph{what} region to request, and \emph{how} to integrate the evidence, replacing the external vSearcher with its own localization predictions.
We use GPT-5-mini~\citep{openai2026gpt5systemcard} as the vReasoner and a fine-tuned Qwen3-VL-8B-Instruct~\citep{Qwen3-VL} as the vSearcher.
More details can be found in Appendix~\ref{app:data_details}.

\subsection{Dataset Properties}
\label{sec:data_props}

Our final training corpus contains 37,149 QA instances. The SFT split includes 17,913 trajectories, of which 14,216 are answerable (79.36\%) and 3,697 are unanswerable (20.64\%). The RL split includes 19,236 prompts, of which 10,579 are answerable (55.00\%). The average document length is 18.51 pages overall, and 17.75 pages for the SFT split. The SFT trajectories contain 2.61 assistant CoT/tool-use rounds on average. More detailed statistics are provided in Appendix~\ref{app:dataset-statistics}.

\section{Experiments}

\subsection{Setup}

\paragraph{Training configuration.}
We use Qwen3-VL-8B-Instruct~\citep{Qwen3-VL} as the base model for SFT and subsequent RL.
We use GRPO~\cite{shao2024deepseekmath} as the RL algorithm and we only use a binary accuracy reward for RL.
The hyperparameter settings are provided in Appendix~\ref{app:hparam}.
Our SFT and RL code is based on \emph{verl}~\citep{sheng2024hybridflow}.
We use weighted sampling for RL (see Appendix~\ref{app:weighted-sampling}).

\paragraph{Evaluation setting.}
We evaluate on the following three types of benchmarks:
\begin{itemize}[leftmargin=3mm]
    \item \textbf{Standard document VQA}: DUDE \citep{van2023document} and MP-DocVQA \citep{tito2023hierarchical}; with 5.7 and 7.0 pages on average.
    \item \textbf{Long document VQA}: MMLongBench-Doc \citep{ma2024mmlongbench} and LongDocURL \citep{deng2025longdocurl}; with 49.4 and 85.6 pages on average.
    \item \textbf{General high-res{.} VQA}: MME-RealWorld-Lite \citep{mme_rw} and O3-Bench \citep{li2026insighto}; both are single-image based.
\end{itemize}

More benchmark details are in Appendix~\ref{app:eval-data}.
All PDF documents are rasterized at 200 DPI as ground-truth high-resolution images.
They are then downsampled by $r \in \{0.25, 0.35, 0.5, 0.7\}$, which are equivalent to DPIs of $\{50, 70, 100, 140\}$.\footnote{
For reference, the resolution of a typical document page at 50 DPI ($r=0.25$) is roughly $425 \times 550$ pixels.}
For documents that cannot fit within the default maximum context length of a model, we downsample the images by $50\%$ (area-wise) up to four times until they fit or fail.
MME-RealWorld-Lite and O3-Bench are image-based so there is no rasterization but the resize still applies.
We use GPT-5-nano as the judge model for answer correctness.
The judge is calibrated with a manually labeled test set (see Appendix~\ref{app:judge-calibration}).

\subsection{Main Results}

\begin{table*}
\centering
\caption{\textbf{Comparison with frontier models on medium-to-long document VQA.} The models are compared under two initial resolution settings: \textit{low} ($r = 0.25$, DPI 50) and \textit{medium} ($r = 0.5$, DPI 100).
The inference mode ``E2E'' means single-turn QA without tool use, while ``Agent'' allows multi-turn zoom-in tool use.
LLM-as-judge accuracy is reported on all benchmarks.
Input page limit is set to 40 for all models due to API limit on some closed models. The limit only affects MMLongBench-Doc and LongDocURL.
The unlimited results can be found in Appendix~\ref{app:uncapped_longdoc_eval}.}
\label{tab:sp_results1}
\setlength{\tabcolsep}{7.4pt}
\fontsize{9}{11}\selectfont
\begin{tabular}{l c cc cc cc cc cc}
\toprule
& & \multicolumn{2}{c}{\textbf{DUDE}} & \multicolumn{2}{c}
{\textbf{MP-DVQA}} & \multicolumn{2}{c}{\textbf{MMLong.}} &
\multicolumn{2}{c}{\textbf{LongDoc.}} & \multicolumn{2}{c}
{\textbf{Average}} \\
\cmidrule(lr){3-4} \cmidrule(lr){5-6} \cmidrule(lr){7-8}
\cmidrule(lr){9-10} \cmidrule(lr){11-12}
& & 0.25 & 0.5 & 0.25 & 0.5 & 0.25 & 0.5 & 0.25 & 0.5 & 0.25 & 0.5
\\

\hline
\textcolor{gray}{\fontsize{7}{8}\textit{Closed proprietary
models}} \\
GPT-5.4-nano          & E2E & 52.8 & 66.0 & 64.2 & 83.4 & 33.6 & 52.2 &
54.4 & 72.7 & 51.2 & 68.6 \\
GPT-5.4-mini          & E2E & 63.2 & 71.6 & 78.3 & 88.9 & 46.4 & 58.8 &
69.3 & 77.6 & 64.3 & 74.2 \\
GPT-5-mini            & E2E & 63.9 & 70.0 & 81.3 & 89.5 & 48.0 & 57.2 &
70.3 & 80.4 & 65.9 & 74.3 \\
Gemini-3.1-Flash-lite & E2E & 70.2 & 70.8 & 87.5 & 89.4 & 57.4 & 58.3 &
75.8 & 75.7 & 72.7 & 73.5 \\
Gemini-3-Flash        & E2E & 72.0 & 72.4 & 89.3 & 89.9 & 62.4 & 61.8 &
77.6 & 76.3 & 75.3 & 75.1 \\
\hline
\hline
\textcolor{gray}{\fontsize{7}{8}\textit{Open models}} \\
InternVL3-8B              & E2E & 52.7 & 62.2 & 66.5 & 84.7 & 13.6 &
33.5 & 26.7 & 42.9 & 39.9 & 55.8 \\
GLM-4.6V-Flash (9B)       & E2E & 40.7 & 53.4 & 50.9 & 72.5 & 14.8 &
15.0 & 24.6 & 26.0 & 32.8 & 41.7 \\
Qwen3-VL-8B               & E2E & 52.9 & 68.5 & 65.1 & 84.9 & 33.7 &
51.4 & 50.5 & 68.4 & 50.5 & 68.3 \\
Qwen3-VL-8B (w/ zoom)    & Agent & 55.1 & 66.4 & 66.1 & 82.6 & 33.2 &
48.8 & 47.1 & 63.8 & 50.4 & 65.4 \\

\hline
\textcolor{gray}{\fontsize{7}{8}\textit{InSight-doc models}} \\
\ourmethod-8B (SFT)    & Agent & 60.8 & 67.2 & 72.2 & 79.5 & 36.5 & 48.0
& 57.0 & 63.7 & 56.6 & 64.6 \\
\rowcolor{cyan!8}
\ourmethod-8B (SFT+RL) & Agent & 70.1 & 73.8 & 83.4 & 87.6 & 50.8 & 58.6
& 63.3 & 70.5 & 66.9 & 72.6 \\
$\quad\Delta$ \textit{w.r.t. Qwen3-VL-8B} & & \textit{\llap{+}17.2} & \textit{\llap{+}\hphantom{0}5.3} & \textit{\llap{+}18.3} & \textit{\llap{+}\hphantom{0}2.7} & \textit{\llap{+}17.1} & \textit{\llap{+}\hphantom{0}7.2} & \textit{\llap{+}12.8} & \textit{\llap{+}\hphantom{0}2.1} & \textit{\llap{+}16.4} & \textit{\llap{+}\hphantom{0}4.3} \\
\bottomrule
\end{tabular}
\end{table*}

\begin{table}
\centering
\caption{\textbf{Comparison with frontier models on general high-resolution VQA.}
LLM-as-judge
accuracy is reported under two initial resolution settings:
$r=0.25$ and $r=0.5$.
The inference modes follow Table~\ref{tab:sp_results1}.}
\label{tab:mmlite_o3bench_results}
\setlength{\tabcolsep}{5.5pt}
\fontsize{9}{11}\selectfont
\begin{tabular}{l cc cc}
\toprule    
& \multicolumn{2}{c}{\textbf{MME-RW}$_\ell$} & \multicolumn{2}{c}
{\textbf{O3-Bench}} \\
\cmidrule(lr){2-3} \cmidrule(lr){4-5}
& 0.25 & 0.5 & 0.25 & 0.5 \\
\midrule
GPT-5.4-nano          & 40.3 & 47.9 & 29.6 & 32.8 \\
GPT-5.4-mini          & 47.4 & 57.2 & 40.9 & 56.2 \\
Gemini-3.1-Flash-lite & 47.8 & 51.8 & 44.6 & 51.0 \\
\midrule
Qwen3-VL-8B              & 41.0 & 49.2 & 22.3 & 35.1 \\
Qwen3-VL-8B (w/ zoom) & 42.5 & 50.6 & 20.3 & 35.4 \\
\rowcolor{cyan!8}
\ourmethod-8B (SFT+RL) & 48.2 & 52.9 & 24.1 & 43.8 \\

\bottomrule
\end{tabular}
\end{table}

\paragraph{Document VQA.}
Table~\ref{tab:sp_results1} compares \ourmethod with open and proprietary models on two medium and two long document VQA benchmarks.
At the low initial resolution ($r=0.25$), \ourmethod-8B (SFT+RL) achieves an average accuracy of 66.9\%, significantly outperforming its base model, Qwen3-VL-8B, by \textbf{16.4 points}.
The gains are consistent across all four benchmarks: 17.2 points on DUDE, 18.3 points on MP-DocVQA, 17.1 points on MMLongBench-Doc, and 12.8 points on LongDocURL.
At the medium resolution ($r=0.5$), \ourmethod reaches an average accuracy of 72.6\%, improving over Qwen3-VL-8B by \textbf{4.3 points}.
RL contributes substantially beyond SFT alone, increasing the average accuracy from 56.6\% to 66.9\% at $r=0.25$ and from 64.6\% to 72.6\% at $r=0.5$.
Despite using an open 8B backbone, \ourmethod also remains competitive with proprietary models: at $r=0.25$, it outperforms all the GPT variants, and at $r=0.5$, it is still comparable with both the GPT and the Gemini models.

\paragraph{General high-resolution VQA.}
Table~\ref{tab:mmlite_o3bench_results} evaluates whether the learned policy generalizes beyond document VQA to general high-resolution VQA.
Relative to Qwen3-VL-8B without zoom, \ourmethod improves MME-RealWorld-Lite by 7.2 points at $r=0.25$ and 3.7 points at $r=0.5$, while improving O3-Bench by 1.8 and 8.7 points, respectively.
It also consistently outperforms the Qwen3-VL-8B zoom agent, showing that access to a zoom tool alone is insufficient without an appropriately trained  policy.
On MME-RealWorld-Lite at $r=0.25$, \ourmethod obtains 48.2\%, slightly surpassing the best proprietary result of 47.8\%.
On O3-Bench at $r=0.5$, it reaches 43.8\%, surpassing GPT-5.4-nano, although it remains below GPT-5.4-mini and Gemini-3.1-Flash-lite.

\paragraph{Comparison with related methods.}
Table~\ref{tab:closest_related} compares InSight-doc with existing visual-retrieval and coarse-to-fine document reasoning methods.
InSight-doc is the only method in the table that jointly supports retriever-free, coarse-to-fine, and iterative evidence acquisition.
It achieves 57.8\% on MMLongBench-Doc and 65.6\% on LongDocURL, exceeding the strongest previously reported results by 15.7 and 9.3 points, respectively.
Since these are cross-paper results with differences in backbones, training data, input resolution, evaluation protocols, etc., they should be interpreted as contextual rather than fully controlled comparisons.
See Appendix~\ref{app:comp-w-related-methods} for a more comprehensive discussion.

\subsection{Performance Analysis}

\begin{figure*}[h]
    \centering  
     \includegraphics[width=\linewidth]{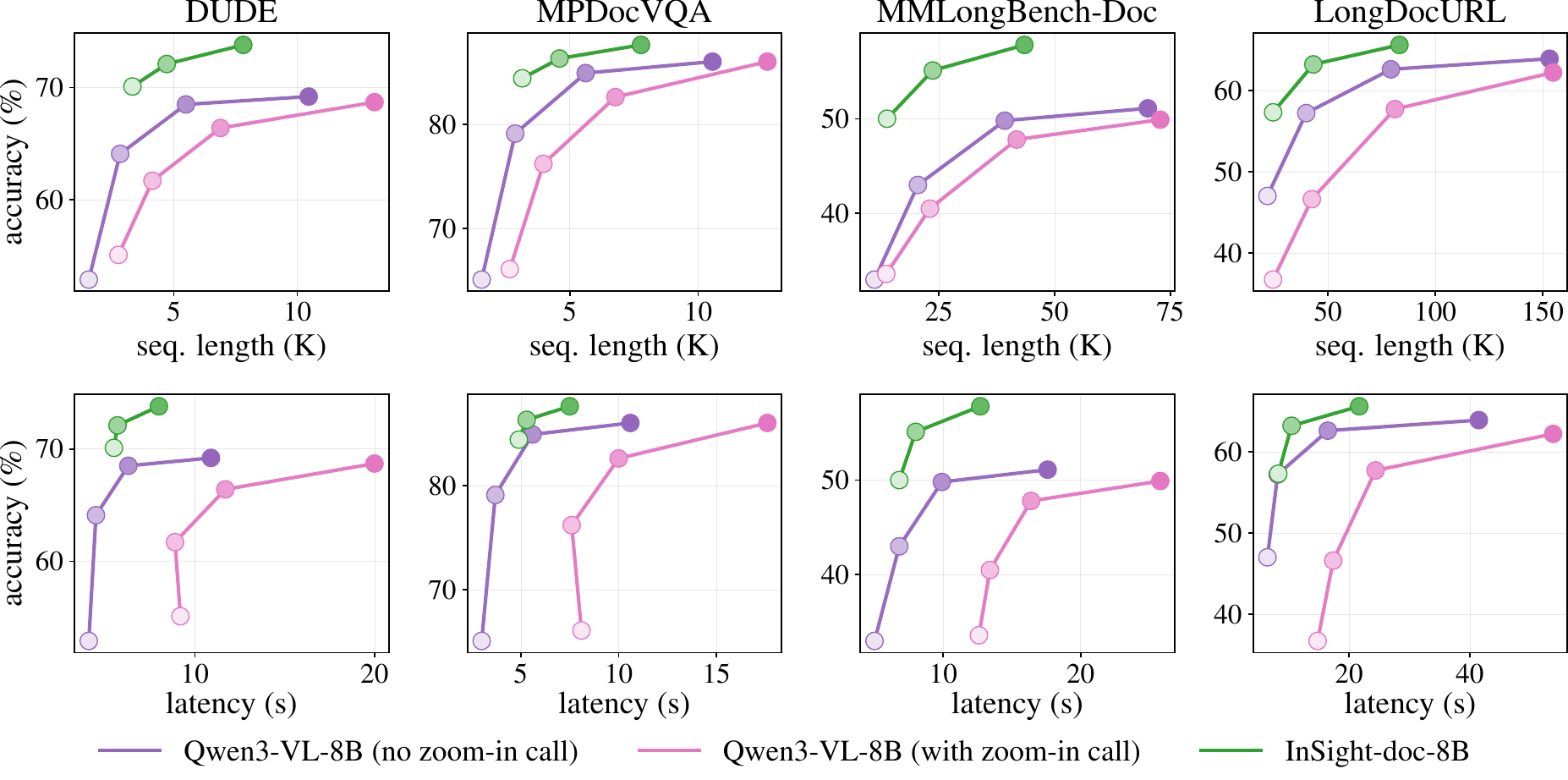}    
\caption{
\textbf{Accuracy vs.\ efficiency on four multi-page document VQA benchmarks.} We compare \textbf{InSight-doc} (ours, green) with Qwen3-VL-8B without zoom-in calls (purple) and with zoom-in calls (pink). For each method, darker shades indicate larger input resize ratios (higher resolution, more visual tokens). Notably, InSight-doc shows a clear tendency to push the Pareto frontier toward the upper-left, achieving a more favorable trade-off.
}
    \label{fig:expriment_main1}

\end{figure*}

As shown in Figure~\ref{fig:expriment_main1}, InSight-doc-8B \emph{dominates} Qwen3-VL-8B on \emph{accuracy-efficiency Pareto frontiers}, often achieving \textbf{higher accuracy at much lower sequence length and end-to-end latency}.


\paragraph{Sequence length reduction.}
Autoregressive decoding under long context often consumes a large amount of KV cache.
Sequence length can be seen as a proxy for \emph{context-related memory consumption}, mainly the KV cache.
Across all four benchmarks, \ourmethod achieves comparable or higher accuracy with substantially shorter sequences than Qwen3-VL-8B.
At 50 DPI ($r = 0.25$), \ourmethod obtains 66.9\% average accuracy, nearly matching the 68.3\% of the 100-DPI ($r = 0.5$) baseline while using \textbf{58\% fewer tokens}.
At 70 DPI ($r = 0.35$), it reaches 70.6\%, exceeding the 69.4\% of the 140-DPI baseline while \textbf{reducing the token count by 66\%}.
The reduction is even more pronounced on the longest-document examples (Figure~\ref{fig:expriment_longest}): \ourmethod at 70 DPI achieves 56.2\% accuracy using 42.4k tokens, compared with 53.2\% accuracy and 136.8k tokens for the 140-DPI baseline, corresponding to a \textbf{69\% reduction}, consistent with Proposition~\ref{prop:relative_sequence_length}.

\paragraph{Latency reduction.}
The shorter contexts also translate into lower inference latency.
At 50 DPI, \ourmethod approaches the accuracy of the 100-DPI baseline while \textbf{reducing average latency by 16\%}.
At 70 DPI, it outperforms the 140-DPI baseline by 1.2 points on average while \textbf{reducing latency by 54\%}.
For example, on MMLongBench-Doc, \ourmethod achieves 55.6\% accuracy in $9.3\,\mathrm{s}$, compared with 52.0\% in $21.2\,\mathrm{s}$ for the 140-DPI baseline.
On LongDocURL, it matches the baseline's 70.5\% accuracy using $11.0\,\mathrm{s}$ instead of $17.3\,\mathrm{s}$.
On the longest-document subset (Figure~\ref{fig:expriment_longest}), \ourmethod at 70 DPI requires only $11.2\,\mathrm{s}$ per example, compared with $39.3\,\mathrm{s}$ for the 140-DPI baseline, yielding a \textbf{71\% reduction} while improving accuracy by 3.0 points.
These reduction rates are again consistent with a theoretical prediction of about 48\%--81\% (Proposition~\ref{prop:relative_latency}).

Overall, the results suggest that \emph{\ourmethod mitigates context rot by bringing relevant context under close inspection while significantly reducing space and time cost}.

\subsection{Unanswerable Questions}

\begin{table}[t]
  \centering
  \caption{\textbf{Not-answerable F1 scores} under low and medium initial-input resolutions ($r=0.25$ and $0.5$).}
  \label{tab:not_answerable_f1}
  \resizebox{\linewidth}{!}{%
      \begin{tabular}{lcccc}
          \toprule
          & \multicolumn{2}{c}{\textbf{DUDE}} & \multicolumn{2}{c}{\textbf{MMLong.}} \\
          \cmidrule(lr){2-3} \cmidrule(lr){4-5}
          & 0.25 & 0.5 & 0.25 & 0.5 \\
          \midrule
          Qwen3-VL-8B              & 44.5 & 57.4 & 48.5 & 55.9 \\
          Qwen3-VL-8B (w/ zoom)    & 50.7 & 55.6 & 58.7 & 62.7 \\
          \ourmethod-8B (SFT+RL)   & 69.1 & 72.4 & 74.4 & 75.1 \\
          \bottomrule
      \end{tabular}%
  }
\end{table}

\begin{table}[t]
\centering
\fontsize{9}{11}\selectfont
\setlength{\tabcolsep}{4.3pt}
\caption{
\textbf{Comparison with visual-retrieval and coarse-to-fine methods.}
\textbf{R-f}: retriever-free, \textbf{C2F}: coarse-to-fine, \textbf{Itr}: iterative multi-turn evidence acquisition, \textbf{Rgn}: region-level evidence, \textbf{MMLD.}: MMLongBench-Doc, and \textbf{LDoc.}: LongDocURL.
Scores are official cross-paper numbers and are \emph{not fully controlled} for backbone, training data, input resolution, page budget, or evaluation protocol. $^\dagger$With GPT-4o as the main agent; other methods are based on open models of similar sizes.
}
\label{tab:closest_related}
\begin{tabular}{lcccccc}
\toprule
\textbf{Method} &
\textbf{R-f} &
\textbf{C2F} &
\textbf{Itr} &
\textbf{Rgn} &
\textbf{MMLD.} &
\textbf{LDoc.} \\
\midrule
ColPali$^\dagger$& \xmark & \xmark & \xmark & \xmark & 30.8 & -- \\
Doc-React$^\dagger$        & \xmark & \xmark & \cmark & \xmark & 38.3 & -- \\
VDocRAG          & \xmark & \xmark & \xmark & \xmark & 18.4  & 39.8 \\
VRAG-RL          & \xmark & \cmark & \cmark & \cmark & 26.6  & 44.9 \\
CogDoc           & \cmark & \cmark & \xmark & \xmark & 33.0  & -- \\
DocSeeker        & \cmark & \xmark & \xmark & \xmark & 40.1  & 51.7 \\
Doc-V$^{\star}$      & \xmark & \cmark & \cmark & \xmark & 42.1  & 56.3 \\

\textbf{InSight-doc} &
\cmark & \cmark & \cmark & \cmark &
\textbf{57.8} & \textbf{65.6} \\
\bottomrule
\end{tabular}
\end{table}

The two benchmarks, DUDE~\citep{van2023document} and MMLongBench-Doc~\citep{ma2024mmlongbench}, contain questions that cannot be answered from the provided documents.
Such questions evaluate whether a model can recognize insufficient evidence rather than hallucinate an unsupported answer.
This setting is particularly relevant for \ourmethod, where \emph{missing visual details may further increase the risk of hallucination}.

Table~\ref{tab:not_answerable_f1} reports F1 scores on the unanswerable subsets.
At $r=0.25$, \ourmethod achieves 69.1 on DUDE and 74.4 on MMLongBench-Doc, improving over Qwen3-VL-8B without zoom by \textbf{24.6} and \textbf{25.9 points}, respectively.
At $r=0.5$, it improves the corresponding scores by \textbf{15.0} and \textbf{19.2 points}.
It also consistently outperforms the zoom-enabled baseline across both datasets and resolutions.
These results indicate that \ourmethod is better able to identify when the document does not contain sufficient evidence, with particularly large gains under low resolution.
Figure~\ref{fig:teaser5} provides a qualitative example on an unanswerable question.

\subsection{Trajectory Quality Analysis}
\label{app:trajectory-quality-full-benchmark}

We analyze the tool-use trajectories of Qwen3-VL-8B with zoom, \ourmethod-8B (SFT), and the final \ourmethod-8B (SFT+RL).
Table~\ref{tab:trajectory-quality-full-benchmark} reports macro-averaged statistics over DUDE, MP-DocVQA, MMLongBench-Doc, and LongDocURL at the low- and medium-resolution settings ($r=0.25$ and $r=0.5$).
Evidence-box coverage is reported only for LongDocURL, the benchmark for which box-level evidence annotations are available.

\begin{table}[t]
\centering
\fontsize{9}{11}\selectfont
\setlength{\tabcolsep}{4.2pt}
\caption{
\textbf{Trajectory-quality comparison on document VQA benchmarks.}
\textit{Crops} denotes the mean number of zoom-in calls.
\textit{Box} denotes the LongDocURL evidence-box hit rate, where a crop must cover at least 50\% of an evidence box.
A trajectory is \textit{redundant} if it contains a pair of crops from the same source with IoU $\geq 0.8$.
A trajectory is \textit{stuck} if it exhausts the tool-call budget and ends with at least two consecutive crops on the same page.
\textit{Area} is the union of all cropped regions normalized by the full-page area.
Box coverage, redundancy, stuck rate, and area are reported as percentages.
}
\label{tab:trajectory-quality-full-benchmark}
\begin{tabular}{lccccccc}
\toprule
\multirow{2}{*}{$r$} & \multicolumn{5}{c}{Overall} & \multicolumn{2}{c}{Unanswerable} \\
\cmidrule(lr){2-6}\cmidrule(lr){7-8}
& Crops & Box & Rdn. & Stuck & Area & Crops & Stuck \\
\hline
\multicolumn{8}{l}{\textcolor{gray}{\fontsize{7}{8}\textit{Qwen3-VL-8B (w/ zoom)}}} \\
\rowcolor{cyan!8}
0.25 & 2.94 & 27.5 & 14.1 & 9.7 & 15.2 & 3.03 & 9.6 \\
0.50 & 1.66 & 41.8 & 6.0 & 4.4 & 11.1 & 1.63 & 4.6 \\
\hline
\multicolumn{8}{l}{\textcolor{gray}{\fontsize{7}{8}\textit{\ourmethod-8B (SFT)}}} \\
\rowcolor{cyan!8}
0.25 & 2.06 & 68.1 & 11.7 & 5.1 & 16.4 & 3.41 & 10.2 \\
0.50 & 1.23 & 70.2 & 4.1 & 1.6 & 12.4 & 2.08 & 3.5 \\
\hline
\multicolumn{8}{l}{\textcolor{gray}{\fontsize{7}{8}\textit{\ourmethod-8B (SFT+RL)}}} \\
\rowcolor{cyan!8}
0.25 & 2.34 & 82.3 & 5.8 & 0.1 & 28.5 & 2.75 & 0.0 \\
0.50 & 1.69 & 77.0 & 2.4 & 0.0 & 22.5 & 1.91 & 0.0 \\
\bottomrule
\end{tabular}
\end{table}

SFT substantially improves evidence localization relative to the zoom-enabled base model.
At $r=0.25$, it reduces the mean number of crops from 2.94 to 2.06 while increasing LongDocURL evidence-box coverage from 27.5\% to 68.1\%.
A similar improvement holds at $r=0.5$, where coverage increases from 41.8\% to 70.2\% with fewer tool calls.
However, SFT alone still produces redundant or stuck trajectories, particularly under aggressive downsampling and on unanswerable questions.

RL further improves both localization quality and trajectory stability.
At $r=0.25$, it raises evidence-box coverage to 82.3\% while using fewer crops than the base model, and reduces redundant and stuck trajectories from 14.1\% and 9.7\% to 5.8\% and 0.1\%, respectively.
At $r=0.5$, it achieves the highest box coverage and the lowest redundancy, while eliminating stuck trajectories almost entirely at both resolutions, including on the unanswerable subsets.
The main trade-off is a larger union crop area: the RL model covers 22.5--28.5\% of a page, compared with 11.1--15.2\% for the base model and 12.4--16.4\% for SFT.
This suggests that RL learns to search more broadly when necessary while avoiding repeated or unproductive tool calls.

\section{Conclusion}

We introduced \textbf{InSight-doc}, which treats visual resolution as an adaptive reasoning-time resource for long-document understanding.
By selectively zooming from low-resolution pages into relevant regions, it improves accuracy while reducing, hallucination, context length and end-to-end inference latency.
Results across document and high-resolution VQA benchmarks demonstrate the effectiveness of coarse-to-fine visual reasoning.

\section*{Limitations}
We only experimented with SFT+RL on Qwen3-VL-8B-Instruct with our proposed framework and dataset.
For a more complete evaluation, additional recent models and models from other providers may be considered.
We did not experiment with any advanced RL methods or reward design.
There may be some room for improvement on this front.


\bibliography{custom}

\clearpage

\appendix

\section{Extended Related Work}
\label{app:related_work}

\subsection{Document Understanding}
Document understanding with MLLMs requires answers to be faithfully grounded in fine-grained visual evidence. However, this remains challenging due to two intertwined bottlenecks. First, MLLMs are prone to \emph{visual hallucination}, producing answers that are insufficiently supported by the document content~\citep{li2023pope}. This issue becomes more pronounced in long-document understanding, where extended contexts can lead to context degradation. Second, it is difficult to balance the trade-off between \emph{fidelity and efficiency}. Preserving fine-grained evidence requires high-resolution image encoding, which substantially increases the number of visual tokens and can exhaust the context budget. Conversely, aggressive downsampling reduces token cost but may discard critical details needed for faithful reasoning~\citep{zheng2026docvstar}.

\paragraph{End-to-end methods.}
The first line of work follows the standard workflow, where all document pages are treated as individual images and directly fed into the model. Most advanced models, such as Qwen-VL series~\cite{bai2025qwen2.5vl, Qwen3-VL} and InternVL series~\cite{zhu2025internvl3, wang2025internvl3.5}, fall into this group. This paradigm preserves fine-grained visual information and generally achieves strong performance. However, it also introduces substantial computational overhead for both training and inference due to the large number of visual tokens. In addition, as many pages may contain task-irrelevant information, scaling this approach to extra-long documents remains challenging. Recently, DeepSeek-OCR~\cite{wei2025deepseek} introduced DeepEncoder and demonstrated strong OCR capability with an aggressive 20$\times$ compression ratio. Nevertheless, it primarily serves as a document parsing model, leaving its effectiveness for document understanding less explored.

\paragraph{Visual retrieval-based methods.}
The second line of work borrows the idea of retrieval-augmented generation (RAG), using textual queries to retrieve a fixed top-\(k\) subset of pages deemed relevant to the question and feeding only these pages to the generator~\citep{yu2024visrag,cho2024m3docrag,faysse2024colpali,chen2024sv,tanaka2025vdocrag,ma2024dse}. For example, VDocRAG~\cite{tanaka2025vdocrag} first employs VDocRetriever to compute the embedding similarity between the \texttt{[EOS]} representations of the text query and document images. The retrieved top-\(k\) pages are then fed into VDocGenerator for question answering. While this paradigm reduces the input to a manageable size, retrieval and reasoning remain loosely coupled: the generator has limited ability to recover from initial retrieval errors, performance can be sensitive to the choice of \(k\), and multi-hop questions may be affected when relevant evidence falls outside the retrieved subset.
Multi-turn retrieval~\citep{wang2026vrag} may mitigate these issues but the fundamental limitations remain.


\paragraph{Coarse-to-fine methods.}
More recently, a third line of work adopts a coarse-to-fine strategy to balance resolution and token cost: the document is first scanned at low resolution to identify candidate pages, and the selected pages are then encoded at high resolution for detailed inspection~\citep{xu2025cogdoc,zheng2026docvstar}.
CogDoc~\citep{xu2025cogdoc} first localizes relevant pages from a low-resolution document, then makes a single transition to high-resolution reasoning over that selected subset.
Doc-V$^\star$~\citep{zheng2026docvstar} is the closest to InSight-doc in spirit. 
Doc-V$^\star$ formulates multi-page Document VQA as a sequential evidence aggregation process, where an OCR-free MLLM agent starts from a global thumbnail overview and iteratively retrieves or fetches target pages for grounded reasoning.
However, Doc-V$^\star$ primarily (94.0–99.8\%) relies on an external retrieval model, Colqwen2.5~\citep{faysse2024colpali}, whereas InSight-doc is fully end-to-end.
This introduces indexing/retrieval errors (at the retrieval side) which they may not be able to recover from.
The reliance on the external retrievers also introduces indexing/retrieval overhead which slows down the inference process.
Finally, both CogDoc and Doc-V$^\star$ still localize evidence primarily at the page level, requiring one or more entire pages to be encoded at high resolution. As a result, they may allocate many visual tokens to irrelevant page content and remain less precise in isolating the sub-page regions that contain the answer, especially when multiple pages need to be revisited.

\subsection{Visual Search}
\label{app:related_work_vs}
Early visual search methods rely on external detectors or scripted workflows to localize regions and trigger tool use via instruction tuning, dynamically acquiring higher-resolution evidence but typically in a single, rigid round of search~\citep{vstar,shen2024zoomeye,li2025dyfo}. The ``think with images'' paradigm popularized by OpenAI o3~\citep{openai2025o3} internalizes zoom and crop as intrinsic operations, allowing the model to seek and integrate \emph{multi-scale} visual evidence within an image--text interleaved reasoning trace. Building on this idea, recent works train MLLMs to ``think with images'' through reinforcement learning (DeepEyes~\citep{zheng2025deepeyes}), synthetic warm-starts (Pixel-Reasoner~\citep{su2025pixel}), and multi-turn RL with over-turn masking (Mini-o3~\citep{lai2025mini}), enabling the model to iteratively decide \emph{when} to zoom, \emph{where} to look, and \emph{how} to fuse evidence across scales. More recently, InSight-o3~\citep{li2026insighto} decouples visual reasoning from visual search by introducing a dedicated search agent that localizes fuzzy, relational, or conceptual regions on arbitrary images, broadening \emph{multi-scale} evidence acquisition beyond discrete object references on natural photographs.

\subsection{Visual Search on Videos}
\label{app:related_work_video}
A long document can be viewed as a sequence of document pages, analogous to consecutive frames in a video. A parallel research thread studies visual search over long videos, where models locate sparse, query-relevant frames or clips from a long temporal stream rather than search across high-resolution document pages. VideoDeepResearch~\citep{yu2025videodeepresearch} uses a text-only large reasoning model with a modular multimodal toolkit to plan which video segments to retrieve and inspect. LongVideo-R1~\citep{qiu2026longvideo} organizes videos hierarchically and trains an MLLM agent to traverse summaries and iteratively focus on informative clips. ProVCA~\citep{yin2026provca} progressively condenses videos at multiple temporal granularities to extract query-relevant frames under tight compute budgets.
While these methods share our goal of selectively gathering sparse evidence instead of encoding the entire visual input, they operate primarily along the temporal axis with relatively low per-frame resolution. Multi-page document understanding, in contrast, requires localizing sub-page regions within individually high-resolution pages, with evidence often scattered across distant and non-adjacent pages.

\begin{figure*}[t]
    \centering
     \includegraphics[width=1.0\linewidth]{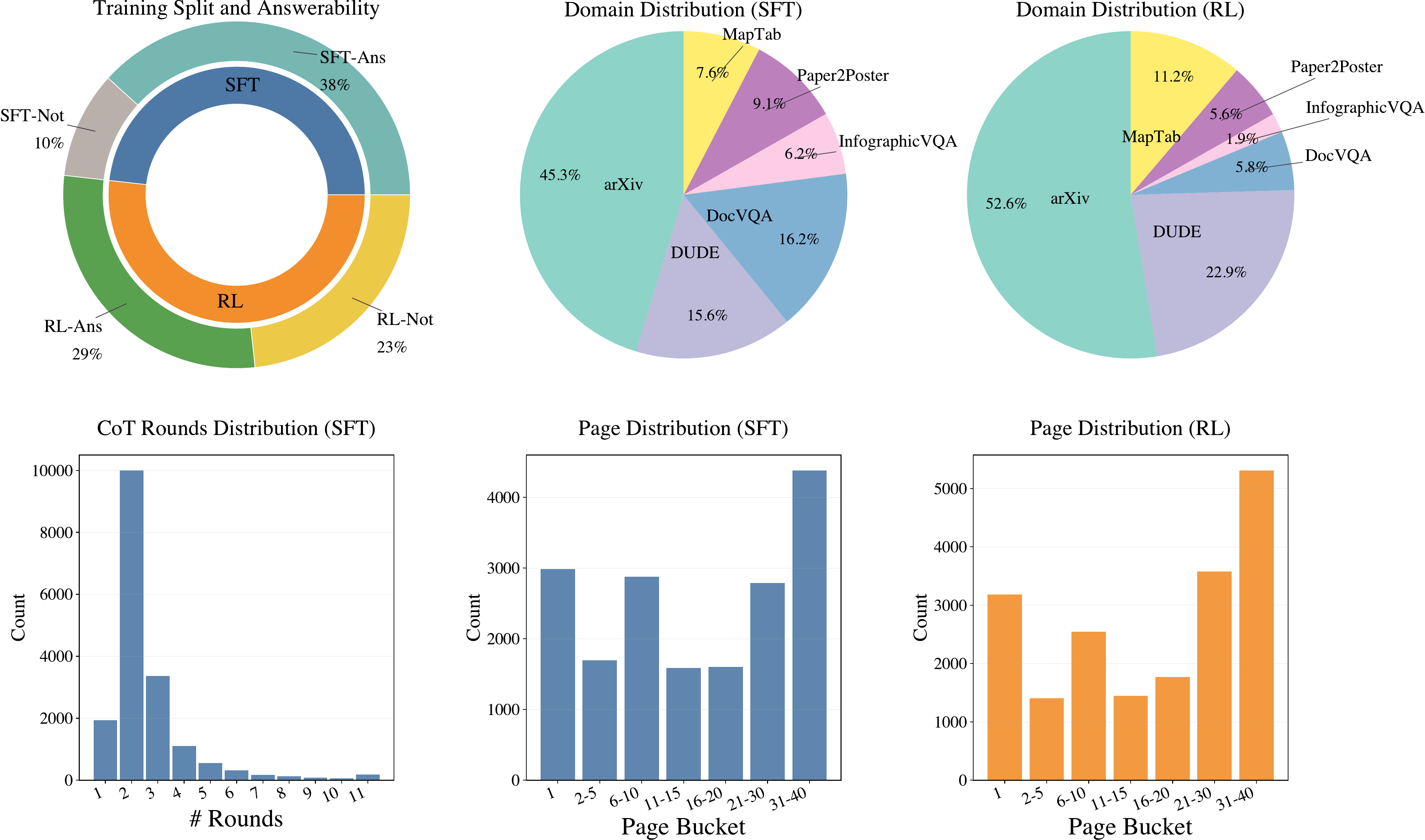}
      \caption{\textbf{Statistics of our training data.}}
    \label{fig:data_dist}
\end{figure*}

\section{Proofs}

\subsection{Proof of Proposition~\ref{prop:relative_sequence_length}}
\label{app:proof-relative-seq-len}

\begin{proof}
By definition, $P_r=x(r)P_0$ and $R_r=y(r)R_0$. Therefore,
\begin{equation}
S_r
=
P_r+R_r
=
x(r)P_0+y(r)R_0.
\end{equation}
Using $\kappa=P_0/R_0$, we have $P_0=\kappa R_0$, and hence
\begin{equation}
\frac{S_r}{S_0}
=
\frac{x(r)P_0+y(r)R_0}{P_0+R_0}
=
\frac{\kappa x(r)+y(r)}{\kappa+1}.
\end{equation}
Because $\kappa>0$ and $x(r),y(r)\geq0$, we obtain
\begin{equation}
\frac{\kappa x(r)+y(r)}{\kappa+1}
\leq
\frac{\kappa x(r)+y(r)}{\kappa}
=
x(r)+\kappa^{-1}y(r).
\end{equation}
Therefore,
\begin{equation}
\frac{S_r}{S_0}
\leq
x(r)+\kappa^{-1}y(r),
\end{equation}
which proves the result.
\end{proof}

\subsection{Proof of Proposition~\ref{prop:relative_latency}}
\label{app:proof-relative-latency}

\begin{proof}
Consider an execution at resize ratio $r$ with $m=n(r)$ tool calls. Let
$U_0$ denote the initial input, let $U_j$ for $1\leq j\leq m$ denote the
input returned by the $j$-th tool call, and let $G_j$ for $0\leq j<m$
denote the intermediate generated responses. Let $G_m$ denote the final
response. Their total token counts satisfy
\begin{align*}
P_r&=\sum_{j=0}^{m}U_j=x(r)P_0,\\
R_r&=\sum_{j=0}^{m}G_j=y(r)R_0.
\end{align*}

First consider a hypothetical single-turn execution in which all $P_r$ input
tokens precede all $R_r$ generated tokens. Under
Eq.~\eqref{eq:single_turn_cost}, its latency is
\[
T(P_r,R_r)
=
\alpha P_r^2+\beta R_r(2P_r+R_r).
\]

The actual execution is interleaved as
\[
U_0,G_0,U_1,G_1,\ldots,U_m,G_m.
\]
With prefix caching, previously processed tokens are not recomputed. The only
difference from the hypothetical ordering concerns pairs consisting of an
earlier generated chunk $G_i$ and a later tool-return chunk $U_j$, where
$i<j$. In the hypothetical single-turn ordering, such pairs contribute
$2\beta G_iU_j$ through the prompt--generation term. In the actual ordering,
$U_j$ is processed during prefill and the same pairs contribute
$2\alpha G_iU_j$. Therefore,
\begin{equation}
\label{eq:multiturn_cost_decomposition}
T_r
=
T(P_r,R_r)
-
2(\beta-\alpha)
\sum_{0\leq i<j\leq m}G_iU_j.
\end{equation}

Since $\beta\geq\alpha$ and all token counts are nonnegative,
Eq.~\eqref{eq:multiturn_cost_decomposition} implies
\[
T_r\leq T(P_r,R_r).
\]
Substituting $P_r=x(r)P_0$ and $R_r=y(r)R_0$ gives
\[
T_r
\leq
\alpha x(r)^2P_0^2
+
\beta y(r)R_0
\left(2x(r)P_0+y(r)R_0\right).
\]

The no-resize baseline has latency
\[
T_0
=
\alpha P_0^2+\beta R_0(2P_0+R_0).
\]
Dividing the preceding inequality by $T_0$, and then dividing its numerator
and denominator by $\beta R_0^2$, yields
\[
\frac{T_r}{T_0}
\leq
\frac{
\gamma\kappa^2x(r)^2
+
2\kappa x(r)y(r)
+
y(r)^2
}{
\gamma\kappa^2+2\kappa+1
},
\]
where $\gamma=\alpha/\beta$ and $\kappa=P_0/R_0$. Using
\[
(w_{\mathrm{p}},w_{\mathrm{c}},w_{\mathrm{g}})
=
\frac{(\gamma\kappa^2,\,2\kappa,\,1)}
{\gamma\kappa^2+2\kappa+1}
\]
gives
\[
\frac{T_r}{T_0}
\leq
w_{\mathrm{p}}x(r)^2
+
w_{\mathrm{c}}x(r)y(r)
+
w_{\mathrm{g}}y(r)^2,
\]
which proves the proposition.
\end{proof}


\section{Data Construction Details}
\label{app:data_details}

This appendix provides the additional details of our data construction pipeline that were omitted from Section~\ref{sec:data}, including per-source sampling strategies, multi-page and multi-hop construction templates, and dataset statistics.

\subsection{Per-Source Sampling Strategies}

\begin{itemize}
    \item \textbf{arXiv}~(Kaggle snapshot): we sample documents following the overall domain distribution of the repository, prioritising longer papers that are more likely to contain dense figures and tables.
    \item \textbf{DUDE, DocVQA, InfographicVQA}: we draw exclusively from the \emph{training} splits and retain only a subset of each to keep the pipeline cost-bounded.
    \item \textbf{Paper2Poster}: we use the author-designed posters and the proposed QA pairs, filtered to keep only those QAs answerable from the posters.
    \item \textbf{MapTab}: we use re-generated QAs grounded on the metro/travel maps spanning 160 cities.
\end{itemize}

\subsection{Multi-page Construction}

QAs from the non-arXiv sources are originally grounded on a single page, and some documents are short. We construct longer-context variants in two ways:

\paragraph{DocVQA expansion.} For a sampled DocVQA question whose evidence page is \(p\), we retrieve the full source document containing \(p\) and rewrite the question (with Gemini~3.1 Flash-Lite, conditioned on the original QA and the surrounding pages) so that it remains answerable but is no longer trivially scoped to a single page. The answer is preserved.

\paragraph{Random multi-document merging.} For the remaining sources, we sample a target page count and concatenate the question-bearing document with several additional documents drawn at random from the same source. The order of documents in the merged sequence is shuffled, while the internal page order within each document is preserved. The original answer remains correct because evidence pages are kept verbatim, but the model must locate them within a much longer context. To avoid invalidating the original questions, we also use Gemini~3.1 Flash-Lite to rewrite the question so that the evidence page remains identifiable from the question itself.

\subsection{QA Construction}
\label{app:qa-construction}

\subsubsection{arXiv QA construction}
\label{app:arxiv-qa-construction}

The arXiv questions are constructed by an evidence-centric pipeline
over scientific papers, rather than sampled directly from a generic document
VQA pool. The pipeline starts by enriching figure and table elements from each
paper: for each document, up to 20 visual elements are selected, and a strong
MLLM produces local context, captions, and explanations using the paper text
and page images. Documents whose enrichment remains empty or failed after
retries are dropped. These enriched visual elements form the evidence inventory
for both single-hop and multi-hop QA construction.

\paragraph{Single-hop visual-element QA.}
For single-hop QA, the pipeline generates questions from individual enriched
visual elements using rendered paper pages. Each candidate question
is anchored to a specific visual element and its page, and the generation
prompt includes the element image, its enriched description, and surrounding
paper context. The generated answer is then checked by a separate evaluation
pass against the rendered evidence page or element. Candidates are retained
only when the answer is supported by the designated visual evidence and the
question is sufficiently grounded in the scientific content. This stage
therefore produces localized visual QA over equations, tables, plots,
diagrams, and nearby scientific text, while retaining evidence pages and, when
available, evidence boxes for later auditing.

\paragraph{Multi-hop visual QA.}
For multi-hop QA, we sample groups of \(k\in\{2,3,4\}\) visuals from the same
paper, such as a table plus one or more figures, and prompt the MLLM with the
visuals and their enriched descriptions to generate questions whose answer can
only be derived by jointly reasoning over all sampled visuals, such as
comparison or aggregation questions.

The multi-hop stage uses rendered pages for candidate generation and applies additional dependency checks: a
minimum of two distinct evidence groups, full-group visual-consistency trials,
and leave-one-out evidence checks. Full-group visual-consistency trials ask the
evaluator to answer using all proposed evidence groups and reject candidates
whose answer is not stable when the complete evidence is visible.
Leave-one-out checks remove one evidence group at a time and require the answer
to fail or become uncertain, which filters out questions that are actually
answerable from only a subset of the visuals.

After the dependency checks,
there are two auxiliary passes. PDF follow-up is an audit
pass that asks whether the candidate
answer is correct when the full PDF context is visible, and whether it remains
answerable when the proposed evidence union is masked or excluded. This helps
detect evidence leakage, i.e., candidates whose answer can be recovered from
other pages or from non-designated visual content.
Reference-answer refinement,
in contrast, directly affects the final QA item: after a candidate is selected,
the answer is regenerated or normalized from higher-resolution renderings of the selected evidence pages. This reduces OCR mistakes,
low-resolution visual ambiguity, and formatting artifacts in the final gold
answer.

Together, these stages reject candidates that are answerable from only
one visual, have unstable answers, ambiguous evidence assignments, evidence
leakage, or noisy target answers.

\subsubsection{Non-arXiv QA construction}
\label{app:non-arxiv-qa-construction}

For non-arXiv sources, questions are taken from existing document
VQA-style datasets and task-specific generation pipelines, including
MP-DocVQA, DUDE, poster QA, infographic QA, and map QA. These sources are more
heterogeneous: some rows require localized visual reasoning, while others are
simple OCR or local text lookup.

\paragraph{Compound QA construction.}
For both arXiv and non-arXiv rows, we can also synthesize multi-hop QAs by
combining several existing single-hop QAs anchored to the same, possibly
merged, document into a compound item:
\begin{quote}
\textbf{Question:} (a)~question 1; (b)~question 2; \ldots\\
\textbf{Answer:}   (a)~answer 1;   (b)~answer 2;   \ldots
\end{quote}
This template guarantees correctness while still requiring the model to attend to multiple, spatially separated parts of the document.

\subsection{SFT Data Construction}
\label{app:sft-data-construction}

\subsubsection{Answerable questions}
\label{app:sft-answerable-construction}

\paragraph{Source data.}
Answerable SFT examples are constructed from a mixture of non-arXiv document
VQA sources and arXiv-derived question-answering sources. The non-arXiv sources
include MP-DocVQA, DUDE, poster QA, infographic QA, and map QA. The
arXiv-derived sources include visually grounded equation/table/text questions
and multi-evidence questions from scientific papers. For answerable examples,
the source data may contain evidence pages and, for a subset, evidence boxes.
These annotations are used only for data auditing and trajectory-quality
analysis; they are not exposed to the policy during training.



\paragraph{Difficulty filtering.}
We apply the data filtering stages defined in the main paper after the
source-specific construction above and before generating tool-use trajectories.
For heterogeneous non-arXiv sources, the first stage removes answerable rows that a low-resolution Qwen3-VL-8B model already solves, since these sources contain many simple OCR or local text-lookup questions.
We do not apply this 8B gate to the answerable arXiv rows because they are not sampled from the same broad, weakly controlled pool.
Instead, they are generated by an evidence-centric scientific-paper
pipeline with several explicit dependency checks as discussed in Appendix~\ref{app:arxiv-qa-construction}.
These checks directly target the main failure mode
that the 8B gate removes for non-arXiv data: questions that can be solved
without finding the intended evidence. Skipping the 8B gate therefore avoids a
redundant source-mismatched filter, while all arXiv rows are still subjected to
the stronger Qwen3-VL-32B filtering stage before trajectory generation. For
answerable rows, this final filter keeps examples that require stronger
document reasoning or tool use.
Table~\ref{tab:sft-filter-stages} summarizes the number of answerable rows that enter each filtering branch, grouped by source family and DPI.

\begin{table*}[t]
\centering
\caption{\textbf{Filtering stages for answerable rows before SFT trajectory generation.} Each cell reports row count followed
by retention relative to the corresponding source-family/DPI source pool. The
DPI columns correspond to resize ratios \(r=0.25,0.35,0.5\) from 200-DPI page
renders. The arXiv rows bypass prior-only
filtering and are therefore carried forward unchanged in the second group; all
sources are then included in zoom-free filtering.
Unanswerable rows are excluded from this table and described separately.}
\label{tab:sft-filter-stages}
\small
\setlength{\tabcolsep}{5pt}
\begin{tabular}{@{}lrrrr@{}}
\toprule
Filtering stage & Total & 50 DPI ($r=0.25$) & 70 DPI ($r=0.35$) & 100 DPI ($r=0.5$) \\
\midrule
\multicolumn{5}{@{}l}{\textit{All sources}} \\
Source pool before filtering & 50{,}903 (100.0\%) & 25{,}344 (100.0\%) & 14{,}839 (100.0\%) & 10{,}720 (100.0\%) \\
After prior-only filtering & 44{,}889 (88.2\%) & 22{,}382 (88.3\%) & 13{,}153 (88.6\%) & 9{,}354 (87.3\%) \\
After zoom-free filtering & 26{,}943 (52.9\%) & 15{,}118 (59.7\%) & 7{,}370 (49.7\%) & 4{,}455 (41.6\%) \\
\addlinespace[2pt]
\multicolumn{5}{@{}l}{\textit{arXiv}} \\
Source pool before filtering & 16{,}949 (100.0\%) & 8{,}367 (100.0\%) & 5{,}582 (100.0\%) & 3{,}000 (100.0\%) \\
After prior-only filtering & 16{,}949 (100.0\%) & 8{,}367 (100.0\%) & 5{,}582 (100.0\%) & 3{,}000 (100.0\%) \\
After zoom-free filtering & 11{,}885 (70.1\%) & 6{,}541 (78.2\%) & 3{,}654 (65.5\%) & 1{,}690 (56.3\%) \\
\addlinespace[2pt]
\multicolumn{5}{@{}l}{\textit{non-arXiv}} \\
Source pool before filtering & 33{,}954 (100.0\%) & 16{,}977 (100.0\%) & 9{,}257 (100.0\%) & 7{,}720 (100.0\%) \\
After prior-only filtering & 27{,}940 (82.3\%) & 14{,}015 (82.6\%) & 7{,}571 (81.8\%) & 6{,}354 (82.3\%) \\
After zoom-free filtering & 15{,}058 (44.3\%) & 8{,}577 (50.5\%) & 3{,}716 (40.1\%) & 2{,}765 (35.8\%) \\
\bottomrule
\end{tabular}
\end{table*}

\paragraph{Trajectory generation with InSight-o3.}
Rows that pass the filters are given to InSight-o3~\citep{li2026insighto}, which generates two-agent multi-turn trajectories.
The original InSight-o3 does not consider multi-image input, so we introduce an additional parameter \texttt{img\_idx} to the vSearcher tool so the vReasoner can tell the vSearcher which image to look at.
The trajectories are then pieced together to form unified, coherent single-agent trajectories.
Specifically, we replace every vSearcher call in a vReasoner trajectory with a zoom-in tool call where the \texttt{img\_idx} parameter comes from the same parameter of the vSearcher call, the \texttt{label} parameter comes from the \texttt{region\_description} parameter of the vSearcher call, and the \texttt{bbox} parameter comes from the region bounding box (after rescaling) returned by the vSearcher call.
A valid trajectory contains the original user question, one or more assistant tool calls, tool-returned image crops, and a final assistant answer that matches the ground-truth answer.
Questions whose trajectories are invalid are reserved for RL.

\paragraph{Degenerate-trajectory removal.}
After generating the trajectories, we remove ones that are not useful for supervised training.
This includes malformed conversations, invalid or missing tool-call arguments, missing final answers, image references that cannot be resolved, and other cases where the converted example would teach an inconsistent interaction pattern.

\paragraph{Final-response normalization.}
For the retained trajectories, we rewrite only the final assistant response
with GPT-5-nano. The tool calls, crop sequence, and tool observations are left
unchanged. We do this because the supervision signal should teach the agent
where and when to zoom, while the raw InSight-o3 final response is in a \texttt{think}+\texttt{answer} style. This is useful
for trajectory generation, but it is not the desired final-answer format for
Qwen3-VL-Instruct-style SFT targets. If kept unchanged, the SFT loss would
optimize for extra reasoning text, detached final-answer lines, self-corrections,
and formatting artifacts. The rewrite converts this artifact into a natural
plain-text answer that preserves the original answer content, folds brief
evidence into normal prose when needed, and removes unnecessary explanation and
formatting variation.

\begin{table*}[t]
\centering
\small
\setlength{\tabcolsep}{3pt}
\caption{\textbf{Data-flow summary for filtering, add-ons, and the final SFT/RL split.}
Each numeric cell is an exact row count; add-on rows with a leading ``+''
are incremental counts, while other rows are cumulative totals. Within each
source-family group, ``All'' is the sum of answerable (Ans.) and unanswerable
(Unans.) rows.}
\label{tab:data-flow-summary}
\begin{tabular}{@{}lrrrrrrrrr@{}}
\toprule
& \multicolumn{3}{c}{\textbf{All sources}} & \multicolumn{3}{c}{\textbf{arXiv}} & \multicolumn{3}{c}{\textbf{non-arXiv}} \\
\cmidrule(lr){2-4}\cmidrule(lr){5-7}\cmidrule(l){8-10}
\textbf{Stage} & All & Ans. & Unans. & All & Ans. & Unans. & All & Ans. & Unans. \\
\midrule
\multicolumn{10}{@{}l}{\textit{Filtering before trajectory generation}} \\
Source pool before filtering & 62{,}318 & 50{,}903 & 11{,}415 & 22{,}552 & 16{,}949 & 5{,}603 & 39{,}766 & 33{,}954 & 5{,}812 \\
After prior-only filtering & 56{,}304 & 44{,}889 & 11{,}415 & 22{,}552 & 16{,}949 & 5{,}603 & 33{,}752 & 27{,}940 & 5{,}812 \\
After zoom-free filtering & 33{,}502 & 26{,}943 & 6{,}559 & 13{,}436 & 11{,}885 & 1{,}551 & 20{,}066 & 15{,}058 & 5{,}008 \\
\addlinespace[2pt]
\multicolumn{10}{@{}l}{\textit{SFT construction}} \\
Correct InSight-o3 trajectories & 14{,}717 & 14{,}216 & 501 & 6{,}350 & 6{,}306 & 44 & 8{,}367 & 7{,}910 & 457 \\
Synthetic unanswerable add-on & +3{,}196 & +0 & +3{,}196 & +1{,}765 & +0 & +1{,}765 & +1{,}431 & +0 & +1{,}431 \\
Final SFT rows & 17{,}913 & 14{,}216 & 3{,}697 & 8{,}115 & 6{,}306 & 1{,}809 & 9{,}798 & 7{,}910 & 1{,}888 \\
\addlinespace[2pt]
\multicolumn{10}{@{}l}{\textit{RL construction}} \\
RL candidate pool & 18{,}785 & 12{,}727 & 6{,}058 & 7{,}086 & 5{,}579 & 1{,}507 & 11{,}699 & 7{,}148 & 4{,}551 \\
After source selection, cleanup, and 24k cap & 11{,}719 & 7{,}403 & 4{,}316 & 6{,}125 & 4{,}618 & 1{,}507 & 5{,}594 & 2{,}785 & 2{,}809 \\
Synthetic unanswerable add-on & +4{,}341 & +0 & +4{,}341 & +2{,}415 & +0 & +2{,}415 & +1{,}926 & +0 & +1{,}926 \\
Multiple-choice add-on & +2{,}176 & +2{,}176 & +0 & +581 & +581 & +0 & +1{,}595 & +1{,}595 & +0 \\
Structured-document add-on & +1{,}000 & +1{,}000 & +0 & +1{,}000 & +1{,}000 & +0 & +0 & +0 & +0 \\
Final RL rows (w/o reweighting) & 19{,}236 & 10{,}579 & 8{,}657 & 10{,}121 & 6{,}199 & 3{,}922 & 9{,}115 & 4{,}380 & 4{,}735 \\
\bottomrule
\end{tabular}
\end{table*}

\subsubsection{Unanswerable questions}
\label{app:sft-unanswerable-construction}

\paragraph{Source unanswerable rows.}
The first source of unanswerable SFT data is naturally occurring unanswerable
questions from non-arXiv data. DUDE provides explicit unanswerable labels. The
poster source does not provide answerability labels directly: some questions
were generated from the underlying paper rather than from the poster itself, so
they are not necessarily answerable from the poster image. We mine these poster
unanswerables by asking a strong MLLM, GPT-5, to answer each question using only
the high-resolution poster; questions whose poster-only answer does not match
the original answer are treated as unanswerable from the poster. These source
unanswerable rows bypass the Qwen3-VL-8B super-easy gate, which is only used to
remove answerable rows solved by the low-resolution 8B model, but they are still
included in the later Qwen3-VL-32B filtering stage.

\paragraph{Filtering unanswerable rows.}
For unanswerable questions, the filtering objective differs from the answerable
case: a row is useful when a strong no-tool model still fails to abstain
correctly, since such examples teach the agent not to hallucinate unsupported
answers. The non-arXiv source pool contains 5{,}812 real unanswerable rows. All
of them bypass the 8B super-easy gate, and the Qwen3-VL-32B no-tool filtering
stage retains 1{,}614 of them as trajectory-generation candidates.

\paragraph{Synthetic mutation-based unanswerable add-on.}
We also add a separate verified-unanswerable branch before trajectory
generation. This add-on has two goals. First, it creates hard negatives that
look very similar to normal answerable questions: each question is only a small
natural perturbation of an answerable seed. Second, it expands unanswerable
coverage beyond the DUDE and poster sources. The seed questions are sampled from
four non-arXiv training partitions and from three arXiv answerable pools: the
base arXiv pool, an additional arXiv pool, and an arXiv spanning-question pool.
The seed sampler uses a roughly balanced mixture between non-arXiv and arXiv
seeds.

For each seed, we first show GPT-5-nano the original question, answer, question
type, and the pages relevant to the seed question, and ask it to make a small
natural mutation that keeps the question document-related but removes document
support~\citep{gautam2023lightweight}. Typical mutations swap an entity, number, date, value, attribute, or
comparison target; the prompt explicitly rejects nonsensical questions,
missing-page questions, hidden-label questions, and dataset artifacts. We then
verify each candidate with Gemini 3.1 Flash Lite Preview using a broader page
window from the same document. The verifier separates genuinely unanswerable
cases from answerable, externally answerable, malformed, generic-answer, or
ambiguous cases. We retain only candidates labeled as missing evidence or
document mismatch, and use the verifier's concise insufficient-evidence answer
as the target answer. Because these rows are explicitly constructed and verified
to be minimal document-grounded negatives, we do not pass them through the
Qwen3-VL-32B difficulty gate; applying that gate would mainly remove clean
abstention examples that the strong model already handles, rather than improve
the quality of the add-on. Accepted rows are rendered at resize ratios
$r \in \{0.25, 0.35, 0.5\}$. The final add-on contains 1{,}582 rows at
$r=0.25$, 985 rows at $r=0.35$, and 629 rows at $r=0.5$. These examples do not
have evidence pages or evidence boxes by definition.

\paragraph{Trajectory generation and postprocessing.}
The retained source unanswerable rows and synthetic mutation-based add-on rows
are passed through the same InSight-o3 trajectory generation, SFT conversion,
degenerate-trajectory removal, and final-response rewriting steps as the
answerable rows. The only semantic difference is the target behavior: the final
response should clearly state that the requested information is missing or
unsupported by the provided document, rather than hallucinating a concrete
answer.
Table~\ref{tab:data-flow-summary} summarizes how the data change across the
filtering stages, add-ons, and final SFT/RL split.

\subsection{RL Data Construction}
\label{app:rl-data-construction}

\paragraph{Source rows and selection.}
The RL data are built from questions that remain useful after the filtering and
SFT-trajectory construction stages. Intuitively, successful InSight-o3
trajectories provide supervised targets for SFT, while the remaining hard
questions and negative examples form the starting pool for RL. We then apply
the same row-level cleanup used elsewhere in the data pipeline: source
selection, duplicate removal, SFT-overlap removal, and prompt-length capping.
This yields the source-derived RL rows shown in
Table~\ref{tab:data-flow-summary}. We additionally include a verified
synthetic unanswerable add-on.
The unanswerable pool therefore combines naturally
unanswerable DUDE/poster rows and mutation-verified synthetic negatives.

\paragraph{Prompt-length capping and resize-ratio recovery.}
RL prompts are capped at 24k estimated prompt tokens, including both text
tokens and image tokens. When an $r=0.5$ row exceeds the cap, we regenerate the
same example at $r=0.35$ and preserve the original resize ratio in the row
metadata. Rows that still exceed the cap after this recovery step are dropped.
This produces the post-cap base RL set before targeted add-ons.

\paragraph{Additional targeted rows.}
After prompt capping, we append two answerable add-ons. The multiple-choice
add-on targets false-negative behavior: the model is shown answerable questions
with a ``not enough information'' option, but the correct choice is one of the
evidence-supported answer options. This discourages erroneous abstention when
sufficient visual evidence is present. The structured-document add-on increases
coverage of arXiv questions that require interpreting document structure. The
final RL parquet contains 19{,}236 rows before weighted sampling.

\subsection{Weighted Refill Sampling for RL}
\label{app:weighted-sampling}

The final RL parquet is not physically balanced. Its raw row
distribution contains substantially more unanswerable rows than we want to
sample during RL training. We therefore balance the training stream with a
weighted refill sampler. For each draw, the sampler chooses a data source from
a YAML weight table, samples a row without replacement from that source-specific
pool, and reshuffles/refills only that source when its pool is exhausted. This
keeps the per-batch mixture close to the target probabilities while still
avoiding repeated rows within a source until that source has been exhausted.
The total sampling mass is 86\% answerable and 14\% unanswerable. The category
weights are summarized in Table~\ref{tab:rl-sampling-targets}.

\begin{table}[t]
\centering
\small
\setlength{\tabcolsep}{5pt}
\caption{RL sampling targets used by the weighted refill sampler. The weights
sum to 1.0.}
\label{tab:rl-sampling-targets}
\begin{tabular}{lr}
\toprule
Sampling target & Weight mass \\
\midrule
Answerable rows & 86.0\% \\
Unanswerable rows & 14.0\% \\
\midrule
arXiv visually grounded QA & 16.04\% \\
arXiv multi-evidence QA & 15.39\% \\
arXiv structural rewrites & 5.00\% \\
DocVQA & 9.97\% \\
DUDE & 22.22\% \\
Infographic QA & 3.99\% \\
Map metro & 15.06\% \\
Map travel & 4.75\% \\
Poster QA & 7.59\% \\
\bottomrule
\end{tabular}
\end{table}

\section{Dataset Statistics and Quality Analysis}
\label{app:dataset-statistics-quality}

\subsection{Basic Statistics}
\label{app:dataset-statistics}

Figure~\ref{fig:data_dist} shows the basic statistics of our SFT and RL datasets.
The final SFT data contain 17{,}913 trajectories, including 14{,}216 answerable
rows and 3{,}697 unanswerable rows. The final RL parquet contains 19{,}236 rows before weighted sampling. Its raw row distribution is 10{,}579 answerable rows and 8{,}657 unanswerable rows; during RL training, the sampler changes the
effective answerable/unanswerable ratio to 86\%/14\%.

The resize ratio $r$ controls the image resolution used when constructing the
prompt: each page image is resized before tokenization, so the prompt-length
budget includes both text tokens and image tokens. Table~\ref{tab:resize-ratio}
shows the resulting distribution. RL is more conservative than SFT because the
24k prompt cap causes some $r=0.5$ examples to be recovered at $r=0.35$.

\begin{table}[t]
\centering
\small
\caption{Resize-ratio distribution of the SFT+RL data.}
\label{tab:resize-ratio}
\setlength{\tabcolsep}{4pt}
\begin{tabular}{lrrr}
\toprule
Dataset & $r=0.25$ & $r=0.35$ & $r=0.5$ \\
\midrule
SFT & 10{,}051 (56.1\%) & 4{,}913 (27.4\%) & 2{,}949 (16.5\%) \\
RL & 13{,}281 (69.0\%) & 4{,}394 (22.8\%) & 1{,}561 (8.1\%) \\
\bottomrule
\end{tabular}
\end{table}

\subsection{SFT Trajectory Quality}
\label{app:sft-trajectory-quality}

\paragraph{Evaluation protocol.}
For trajectory \(i\), let \(C_i\) be its sequence of zoom-in crops, \(P_i\)
the annotated evidence pages, and \(B_i\) the annotated evidence boxes. For a
crop \(c\) and evidence box \(b\) on the same original page, define evidence
coverage as
\[
\mathrm{cov}(c,b)=\frac{\mathrm{area}(c\cap b)}{\mathrm{area}(b)},
\]
and define crop/evidence IoU in the standard way. We use a region-hit threshold
\(\tau=0.5\). Unless otherwise stated, evidence metrics are computed only on
answerable rows with recoverable evidence annotations; unanswerable rows and
rows without evidence annotations are excluded.

We report the following metrics:
\begin{itemize}
\item \textbf{Coverage.} Evidence-page hit rate is the fraction of page-evidence
trajectories with \(|C_i|>0\) for which at least one crop lands on a page in
\(P_i\). Evidence-region hit rate is the fraction of box-evidence trajectories
with \(|C_i|>0\) for which \(\max_{c\in C_i,b\in B_i}\mathrm{cov}(c,b)\geq\tau\).
Mean max evidence coverage averages \(\max_{c,b}\mathrm{cov}(c,b)\) over cropped
box-evidence trajectories.
\item \textbf{Localization precision.} Mean max crop/evidence IoU averages
\(\max_{c,b}\mathrm{IoU}(c,b)\) over cropped box-evidence trajectories. Crop
region-hit rate is a crop-level statistic: the number of crops in box-evidence
trajectories satisfying \(\max_b\mathrm{cov}(c,b)\geq\tau\), divided by the
total number of crops in those trajectories. Crops per evidence-region-hit crop
is the inverse-style cost statistic: total crops in box-evidence trajectories
divided by the number of region-hit crops. Crop area fraction is computed over
the original full pages that were cropped at least once, with overlapping crop area counted only once.
\item \textbf{Efficiency and redundancy.} Same-source
overlap rate is the fraction of trajectories with at least
one crop pair from the same source image whose IoU is at least 0.8. Stuck
rate is the fraction of trajectories that exhaust the tool-call limit and whose
final consecutive crop run contains at least two crops on the same page. Stop exactly at first region hit is computed over trajectories that do
hit an evidence region and checks whether the final crop is the first crop that
satisfies \(\mathrm{cov}\geq\tau\).
\end{itemize}

\paragraph{Summary.}
The SFT trajectories have high evidence coverage: among answerable examples
with evidence annotations and at least one crop, 95.33\% hit an evidence page
and 85.00\% cover at least half of an evidence box. They are also reasonably
precise spatially: the average crop union covers 14.50\% of full-page area,
65.34\% of crops in box-evidence trajectories are evidence-region hits, and the
mean best crop/evidence IoU is 52.37\%. Finally, the trajectories are usually
efficient rather than repetitive: the mean trajectory uses 1.61 crops, most
rows use one crop, near-duplicate same-source crops occur in 3.16\% of trajectories, and only 0.99\% of such trajectories are
classified as stuck.

\subsection{Qualitative Examples}

See an example of our training data in Figure \ref{fig:traing_data}.
More examples can be found on Hugging Face.

\section{Experiment Settings}

\subsection{Evaluation Datasets}
\label{app:eval-data}
We conduct a comprehensive evaluation on six datasets that span both standard and challenging long-document scenarios. Specifically, we evaluate on the validation sets of \textbf{DUDE} \citep{van2023document} and \textbf{MP-DocVQA (MP-DVQA)} \citep{tito2023hierarchical}, two widely adopted multi-page document VQA benchmarks. To further assess performance under extreme document lengths, we include two challenging long-document benchmarks: \textbf{MMLongBench-Doc (MMLong.)}  \citep{ma2024mmlongbench}, which contains documents of up to 468 pages, and \textbf{LongDocURL (LongDoc.)} \citep{deng2025longdocurl}, with 50--150 pages for all documents.
Unless stated otherwise, evaluations are conducted on full document pages without any limit.

To evaluate how well \ourmethod can generalize out-of-distribution beyond the document domain, we additionally evaluate its performance on \textbf{MME-RealWorld-Lite (MME-RW$_\ell$)} \citep{mme_rw} and \textbf{O3-Bench}~\citep{li2026insighto}. MME-RealWorld-Lite features a diverse set of high-resolution, real-world images, e.g., video monitoring, remote sensing, OCR in the wild, and self-driving corner cases~\cite{li2022coda}.
It has a mean image size of $2823 \times 1579$ pixels.
O3-Bench evaluates visual search, fine-grained perception, and multi-hop reasoning over high-resolution digital maps and composite charts. It tests how well an AI agent can truly ``think with images'' with interleaved attention to visual details.
It has a mean image size of $4602 \times 3967$ pixels.

\begin{table}[t]
\centering
\small
\setlength{\tabcolsep}{4pt}
\caption{\textbf{SFT trajectory-quality metrics.} Evidence metrics exclude unanswerable
rows and rows without evidence metadata. Page and region hit rates
also exclude zero-crop trajectories from their denominators.}
\label{tab:sft-quality}
\begin{tabular}{@{}p{0.70\columnwidth}r@{}}
\toprule
Metric & Value \\
\midrule
\multicolumn{2}{@{}l}{\textit{Scope}} \\
Rows with page evidence & 13{,}335 \\
Rows with box evidence & 6{,}925 \\
\addlinespace[2pt]
\multicolumn{2}{@{}l}{\textit{Coverage}} \\
Evidence-page hit rate & 95.33\% \\
Evidence-region hit rate, coverage $\geq 0.5$ & 85.00\% \\
Mean max evidence coverage & 85.24\% \\
\addlinespace[2pt]
\multicolumn{2}{@{}l}{\textit{Localization precision}} \\
Mean max crop/evidence IoU & 52.37\% \\
Crop region-hit rate & 65.34\% \\
Crops per evidence-region-hit crop & 1.53 \\
Crop area fraction & 14.50\% \\
\addlinespace[2pt]
\multicolumn{2}{@{}l}{\textit{Efficiency and redundancy}} \\
Same-source overlap rate, IoU $\geq 0.8$ & 3.16\% \\
Stuck rate & 0.99\% \\
Stop exactly at first region hit & 71.14\% \\
\bottomrule
\end{tabular}
\end{table}

\begin{figure*}[t]
    \centering     \includegraphics[width=\linewidth]{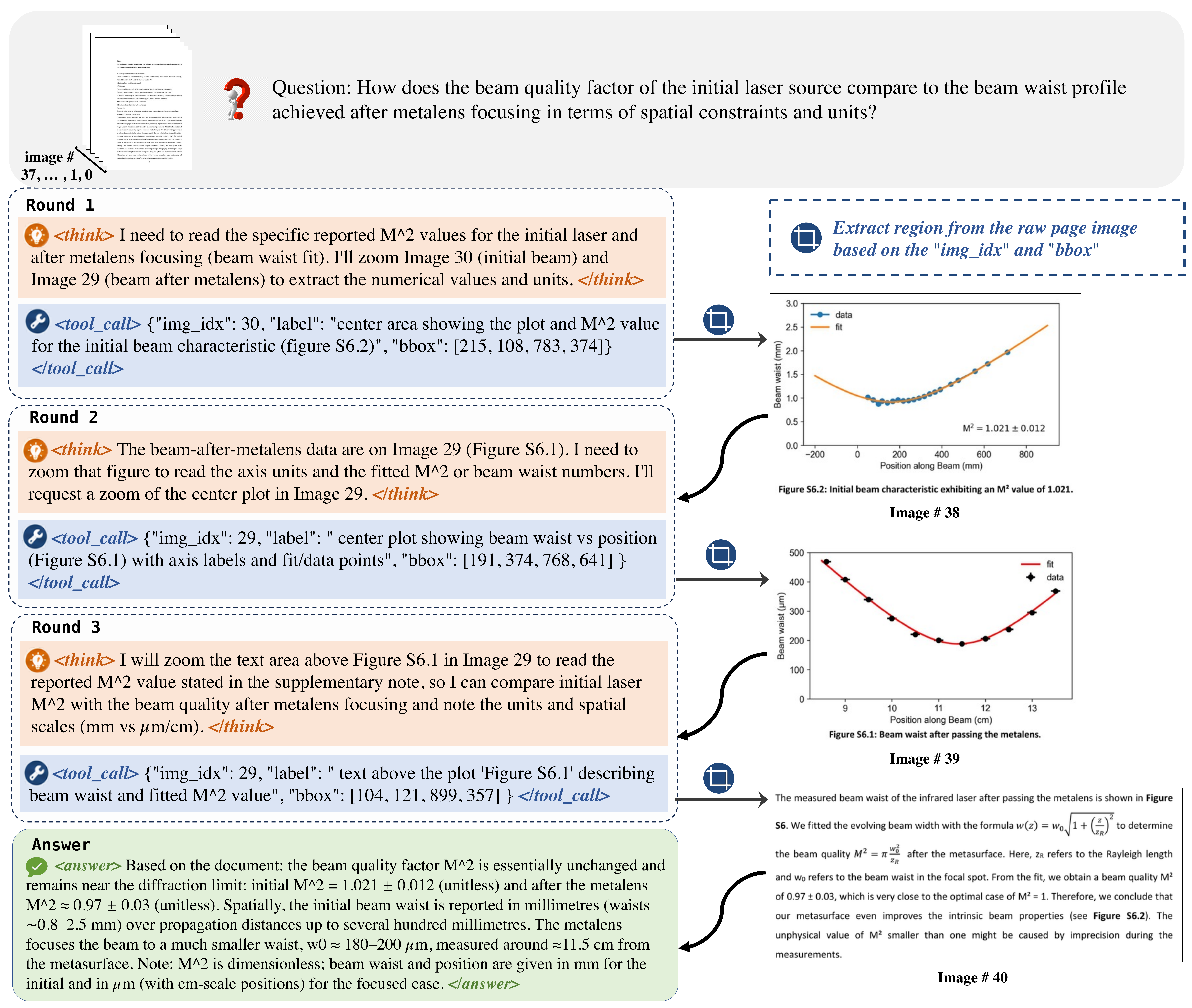}  
    
\caption{An example of our SFT data.}
    \label{fig:traing_data}
\end{figure*}

\subsection{Evaluation Metrics}
\label{app:longdoc_metric_definitions}

Unless otherwise stated, we report macro-averaged values of the following metrics over the stated evaluation benchmarks.

\begin{itemize}
    \item \textbf{Error rate} is defined as $1-\mathrm{accuracy}$, where \textbf{accuracy} is computed from the judged correctness of the model's final answer.
    
    \item \textbf{Hallucination rate} is computed only on unanswerable questions. It is the fraction of such questions for which the model provides a substantive answer instead of abstaining.
    
    \item \textbf{Sequence length} is the average total prompt-plus-response length per example, measured in tokens. This includes all (text/image) tokens including tool response tokens.
    
    \item \textbf{Latency} is the average end-to-end model inference time per example. For tool-using models, this includes all model calls and tool execution made during the multi-turn trajectory.
\end{itemize}

\subsection{Judge Calibration}
\label{app:judge-calibration}

Our reward model relies on an LLM judge to compare the model's final answer against the ground-truth answer. We first used \textbf{a legacy two-stage judge}. This judge asks the LLM to extract a concise final answer from the model trajectory and then asks a second LLM call to
verify whether the extracted answer matches the ground truth. This design is conservative and produced very few false positives, but it also produced \textit{many false negatives}. Manual inspection showed that the main issue was answer extraction: for long-form or multi-target
answers, the extractor sometimes collapsed a correct response into an overly generic phrase such as ``HIT applications listed'' or ``text describing Control Panel'', which then caused the verifier to reject an otherwise correct answer.

To reduce these false negatives, we next experimented with \textbf{a single-call judge}. Instead of separating extraction and verification, the single-call judge asks the LLM to directly decide whether the model answer should be considered correct. This recovered some correct
long-form answers, but it introduced \textit{a more serious failure mode for RL: false positives}. In particular, the single-call judge was more likely to credit incomplete answers or refusal-style answers on answerable questions. During RL, such false positives can become
exploitable reward signals, so this failure mode is more dangerous than sparse false negatives.

Motivated by these observations, we constructed \textbf{a 150-example judge calibration set with human labels}. The set is intentionally stress-tested rather than distribution-matched. It contains a variety of cases including legacy-correct cases, long-answer cases, multi-target list cases, unanswerable
questions, and \textit{25 answerable-refusal cases targeting the observed reward-hacking pattern}. To reduce the risk that the judge calibration overfits to one model's response style, the calibration set includes trajectories from \textit{multiple sources}, including Qwen3-VL-8B, earlier RL models, API models, and both training and evaluation examples.

We then designed a minimally modified legacy prompt, denoted \textbf{\texttt{legacy-v2}}. This judge keeps the two-stage extraction-and-verification structure of the legacy judge, but changes the prompts to make extraction more faithful to the final answer, avoid
inferring unsupported multiple-choice options, and reject answerable questions when the final response merely refuses to answer or states that the evidence is insufficient.

\begin{table}[h]
\centering
\small
\caption{Judge calibration on a 150-example manually labeled stress set using GPT-5-nano. Metrics are percentages except FP/FN counts.}
\label{tab:judge-calibration}
\begin{tabular}{lcccccc}
\toprule
Judge & Acc. & Prec. & Recall & F1 & FP & FN \\
\midrule
legacy & 84.7 & 98.1 & 70.3 & 81.9 & 1 & 22 \\
single-call & 87.3 & 86.7 & 87.8 & 87.2 & 10 & 9 \\
legacy-v2 & 94.7 & 97.1 & 91.9 & 94.4 & 2 & 6 \\
\bottomrule
\end{tabular}
\end{table}

As shown in Table~\ref{tab:judge-calibration}, the original legacy judge is precise but overly strict, while the single-call judge improves recall at the cost of many more false positives. The selected \texttt{legacy-v2} provides a better compromise: it
substantially reduces the legacy judge's false negatives while keeping false positives low. The remaining disagreements are mostly borderline cases, such as abbreviated table titles, terse unanswerable responses, or partially specified hierarchy titles.
\subsection{Hyperparameter Settings}
\label{app:hparam}
The key hyperparameters of the SFT and the RL of \ourmethod-8B are listed in Table~\ref{tab:hparam-sft} and Table~\ref{tab:hparam-rl}.

\begin{table}[h!]
    \centering
    \small
    \caption{Key hyperparameters for \ourmethod-8B SFT.}
    \label{tab:hparam-sft}
    \begin{tabular}{ll}
    \toprule
    \textbf{Hyperparameter} & \textbf{Value} \\
    \midrule
    Initialization & Qwen3-VL-8B-Instruct \\
    \# of training examples & 17,913 \\
    Training steps & 1118, 2 epochs \\
    Fine-tuning type & full-parameter fine-tuning \\
    Global batch size & 32 \\
    Max sequence length & 65,536 tokens \\
    Sequence parallel size & 4 \\
    Learning rate & $5\times10^{-6}$ \\
    LR schedule & cosine decay \\
    Warmup ratio & 0.05 \\
    Minimum learning rate & $5\times10^{-7}$ \\
    Optimizer & AdamW \\
    Weight decay & 0.01 \\
    Gradient clipping & 1.0 \\
    Vision encoder & frozen \\
    \bottomrule
    \end{tabular}
  \end{table}

\begin{table}[h!]
    \centering
    \small
    \caption{Key hyperparameters for \ourmethod-8B RL.}
    \label{tab:hparam-rl}
    \begin{tabular}{ll}
    \toprule
    \textbf{Hyperparameter} & \textbf{Value} \\
    \midrule
    Initialization & \ourmethod-8B (SFT) \\
    \# of training examples & 19,236 \\
    RL algorithm & GRPO \\
    Training steps & 800 \\
    Global batch size & 24 prompts \\
    Rollouts per prompt & 8 \\
    Effective rollout batch size & 192 responses \\
    Learning rate & $1\times10^{-6}$ \\
    Optimizer & AdamW \\
    KL regularization & low-var. KL, coeff{.} $0.01$ \\
    Max prompt length & 24,576 tokens \\
    Max response length & 8,192 tokens \\
    vLLM max model length & 32,768 tokens \\
    Vision encoder & frozen \\
    Sequence parallel size & 4 \\
    Rollout temperature & $0.7$ \\
    Rollout top-$p$ & $0.8$ \\
    Rollout top-$k$ & $20$ \\
    Rollout presence penalty & 1.5 \\
    Tool-use limit & 10 times \\
    Full-page image max area & $3500^2$ pixels \\
    Crop max area & $1280^2$ pixels \\
    Reward model & GPT-5-nano judge \\
    Reward weights & accuracy 1.0 \\
    Data sampler & weighted random refill \\
    \bottomrule
    \end{tabular}
\end{table}

\subsection{Inference Configuration for Evaluation}
Table~\ref{tab:hparam-inference} provides the inference configuration for \ourmethod-8B and open models.
For InternVL3-8B and GLM-4.6V-Flash, the max model length is set to their default context length.
For closed proprietary models, the inference is done via API.
The configuration follows the non-vLLM settings in Table~\ref{tab:hparam-inference}.

For long-document VQA benchmarks, i.e., MMLongBench-Doc and LongDocURL, we use a low-concurrency setting (4 workers times at most 1 concurrent job per worker) to reduce memory pressure.
For other benchmarks, we use a high-concurrency setting (8 workers times at most 4 concurrent jobs per worker).

\begin{table}[h!]
  \centering
  \small
  \caption{Key inference configuration for evaluation.}
  \label{tab:hparam-inference}
  \begin{tabular}{ll}
  \toprule
  \textbf{Configuration} & \textbf{Value} \\
  \midrule
  Inference backend & vLLM \\
  vLLM replicas & 4 \\
  GPUs per replica & 1 \\
  Number of agent workers & 8 (high) or 4 (low) \\
  Worker concurrency & 4 (high) or 1 (low) \\
  vLLM max model length & 262,144 tokens \\
  Max generated tokens & 16,384 tokens \\
  vLLM max batched tokens & 32,768 tokens \\
  vLLM max sequences & 64 \\
  GPU memory utilization & 0.8 \\
  Prefix caching & enabled \\
  Chunked prefill & enabled \\
  Sampling temperature & $0.7$ \\
  Sampling top-$p$ & $0.8$ \\
  Sampling top-$k$ & $20$ \\
  Presence penalty & 1.5 \\
  Repetition penalty & 1.0 \\
  Full-page image max area & $3500^2$ pixels \\
  Crop image max area & $1280^2$ pixels \\
  Region zoom factor & 2.0 \\
  Tool parser & Hermes-style tool parser \\
  Max parallel tool calls & 1 \\
  Tool-use limit & 10 times \\
  Context overflow handling & halve image area up to 4$\times$ \\
  \bottomrule
  \end{tabular}
\end{table}


\begin{table*}[t]
  \centering
  \caption{\textbf{Comparison with Qwen3-VL-8B under no page limit.} The models are compared under four initial resolution settings: \textit{low} ($r = 0.25$, DPI 50), \textit{medium-low} ($r = 0.35$, DPI 70), and \textit{medium} ($r = 0.5$, DPI 100), and \textit{high} ($r = 0.7$, DPI 140). $^\dagger$Page-limited results are included for reference.}
  \label{tab:uncapped_longdoc_results}
  \fontsize{8.5}{10}\selectfont
  \setlength\arrayrulewidth{0.6pt}
  \resizebox{\linewidth}{!}{%
  \begin{tabular}{l cccc cccc cccc}
      \toprule
        & \multicolumn{4}{c}{\textbf{MMLongBench-Doc}}
        & \multicolumn{4}{c}{\textbf{LongDocURL}}
        & \multicolumn{4}{c}{\textbf{Average}} \\
      \cmidrule(lr){2-5} \cmidrule(lr){6-9} \cmidrule(lr){10-13}
      \textbf{Model}
      & 0.25 & 0.35 & 0.5 & 0.7
      & 0.25 & 0.35 & 0.5 & 0.7
      & 0.25 & 0.35 & 0.5 & 0.7 \\
      \midrule
      Qwen3-VL-8B$^\dagger$    & 33.7 & 45.3 & 51.4 & 52.0 & 50.5 & 63.7 & 68.4 & 70.5 & 42.1 & 54.5 & 59.9 & 61.2 \\
      Qwen3-VL-8B   & 33.0 & 43.0 & 49.8 & 51.1 & 47.0 & 57.2 & 62.6 & 63.9 & 40.0 & 50.1 & 56.2 & 57.5 \\

      \midrule
      Qwen3-VL-8B (w/ zoom)$^\dagger$    & 33.2 & 40.4 & 48.8 & 53.1 & 47.1 & 58.2 & 63.8 & 68.8 & 40.2 & 49.3 & 56.3 & 60.9 \\
      Qwen3-VL-8B (w/ zoom)  & 33.6 & 40.5 & 47.8 & 49.9 & 36.7 & 46.6 & 57.7 & 62.2 & 35.2 & 43.5 & 52.7 & 56.0 \\

      \midrule
      \ourmethod-8B (SFT)$^\dagger$      & 36.5 & 43.0 & 48.0 & 47.3 & 57.0 & 61.1 & 63.7 & 66.0 & 46.7 & 52.0 & 55.9 & 56.6 \\
      \ourmethod-8B (SFT)    & 37.9 & 43.7 & 45.9 & 46.7 & 47.2 & 54.4 & 57.2 & 59.1 & 42.5 & 49.1 & 51.6 & 52.9 \\

      \midrule
      \ourmethod-8B (SFT+RL)$^\dagger$   & 50.8 & 55.6 & 58.6 & 57.9 & 63.3 & 68.4 & 70.5 & 71.9 & 57.0 & 62.0 & 64.5 & 64.9 \\
      \ourmethod-8B (SFT+RL) & 50.0 & 55.1 & 57.8 & 58.5 & 57.3 & 63.2 & 65.6 & 67.0 & 53.7 & 59.1 & 61.7 & 62.7 \\
      \bottomrule
  \end{tabular}%
  }
\end{table*}

\section{Additional Results and Discussion}

\subsection{Figure~\ref{fig:longdoc_efficiency} Details}
\label{app:longdoc_efficiency_figure}

Figure~\ref{fig:longdoc_efficiency} compares Qwen3-VL-8B-Instruct~\cite{Qwen3-VL} and InSight-doc-8B under matched and reduced input resolutions.
The reported metrics are defined in Appendix~\ref{app:longdoc_metric_definitions}.
All the results except hallucination rates are macro averages over MMLongBench-Doc~\cite{ma2024mmlongbench} and LongDocURL~\cite{deng2025longdocurl}.
The hallucination rates are macro averages over MMLongBench-Doc and DUDE~\cite{van2023document}.

\subsection{Long-Document VQA without Page Limit}
\label{app:uncapped_longdoc_eval}

Table~\ref{tab:uncapped_longdoc_results} compares each model under the standard capped long-document setting and the uncapped setting, where all document pages are retained. The uncapped rows are consistently lower than their capped counterparts in macro average,
showing that the page cap hides part of the difficulty of long-document understanding. The drop is driven mostly by LongDocURL: for example, at $r=0.7$, LongDocURL drops from 70.5 to 63.9 for Qwen3-VL without tools, from 68.8 to 62.2 with zoom, from 66.0 to 59.1 for
SFT, and from 71.9 to 67.0 for SFT+RL. In contrast, MMLongBench-Doc changes much less, suggesting that the additional uncapped pages in LongDocURL introduce more distractors or require broader evidence aggregation.

The relative ordering is stable across the capped and uncapped settings. The RL model remains the strongest model at every resize factor, and its capped-to-uncapped macro drop is smaller than the other tool-using models, especially at high resolution: at $r=0.7$,
SFT+RL drops by 2.2 points, compared with 3.7 points for SFT and 4.9 points for the base model with zoom. This suggests that the learned zoom policy is not only more accurate under the standard capped evaluation, but also more robust when the document is fully
exposed.

\subsection{Detailed Comparison with Related Methods}
\label{app:comp-w-related-methods}

\begin{table*}[t]
\centering
\small
\setlength{\tabcolsep}{4pt}
\renewcommand{\arraystretch}{1.12}
\caption{
\textbf{Expanded comparison with retrieval-based, coarse-to-fine, iterative, and structured document reasoning methods.}
Scores are official cross-paper numbers and are used for positioning rather than controlled head-to-head comparison.
\textbf{R-f} (retriever-free) means no external page/document retriever is used.
\textbf{C2F} (coarse-to-fine) means the method starts from coarse document view and then acquires finer evidence.
\textbf{Itr} (iterative) means explicit multi-turn evidence acquisition.
\textbf{Rgn} (region) means region-level evidence acquisition rather than page-level.
\textbf{MPDoc.}, \textbf{MMLD.}, and \textbf{LDoc.} refer to MP-DocVQA, MMLongBench-Doc, and LongDocURL, respectively.
$^\dagger$Prior work commonly reports ANLS on DUDE and MP-DocVQA, while our InSight-doc results are evaluated with the same LLM-as-judge protocol used for our main experiments across benchmarks.
}
\label{tab:appendix_related_full}
\begin{tabular}{llccccccccc}
\toprule
\textbf{Method} &
\textbf{Backbone} &
\makecell{\textbf{Param.}} &
\makecell{\textbf{R-f}} &
\makecell{\textbf{C2F}} &
\textbf{Itr} &
\makecell{\textbf{Rgn}} &
\textbf{DUDE} &
\textbf{MPDoc.} &
\textbf{MMLD.} &
\textbf{LDoc.} \\
\midrule
\multicolumn{11}{l}{\textcolor{gray}{\textit{Retrieval-based methods}}} \\

GPT-4o + ColPali &
GPT-4o &
-- &
\xmark & \xmark & \xmark & \xmark &
-- & -- & 30.8 & -- \\

CREAM &
Pix2Struct + LLaMA2 &
7B &
\xmark & \cmark & \xmark & \xmark &
52.5 & 74.3 & -- & -- \\

M3DocRAG &
Qwen2-VL &
7B &
\xmark & \xmark & \xmark & \xmark &
39.5 & 84.4 & 21.0 & 35.1 \\

VisRAG &
MiniCPM-V 2.6 &
8B &
\xmark & \xmark & \xmark & \xmark &
43.1 & -- & 18.8 & 41.9 \\

SV-RAG &
InternVL2 &
4B &
\xmark & \xmark & \xmark & \xmark &
45.0 & 71.0 & 23.0 & -- \\

VDocRAG &
Phi3-Vision &
4B &
\xmark & \xmark & \xmark & \xmark &
44.0 & 62.6 & 18.4 & 39.8 \\

MoLoRAG &
Qwen2.5-VL &
7B &
\xmark & \xmark & \xmark & \xmark &
-- & -- & 41.0 & 51.9 \\

URaG &
Qwen2.5-VL &
7B &
\xmark & \cmark & \xmark & \xmark &
57.6 & 88.2 & 33.8 & 52.2 \\

\midrule
\multicolumn{11}{l}{\textcolor{gray}{\textit{Iterative, coarse-to-fine, and structured reasoning methods}}} \\

Doc-React &
GPT-4o &
-- &
\xmark & \xmark & \cmark & \xmark &
-- & -- & 38.3 & -- \\

VRAG-RL &
Qwen2.5-VL &
7B &
\xmark & \cmark & \cmark & \cmark &
-- & -- & 26.6 & 44.9 \\

CogDoc &
Qwen2.5-VL &
7B &
\cmark & \cmark & \xmark & \xmark &
46.2 & 75.0 & 33.0 & -- \\

DocR1 &
Qwen2.5-VL &
7B &
\cmark & \cmark & \xmark & \xmark &
54.4 & 87.5 & -- & -- \\

DocSeeker &
Qwen2.5-VL &
7B &
\cmark & \xmark & \xmark & \xmark &
57.4 & 86.2 & 40.1 & 51.7 \\

Doc-V$^{\star}$ &
Qwen2.5-VL &
7B &
\xmark & \cmark & \cmark & \xmark &
64.5 & 86.2 & 42.1 & 56.3 \\

MM-Doc-R1 &
Qwen3 + Qwen2.5-VL &
8B &
\xmark & \cmark & \cmark & \xmark &
-- & -- & 49.7 & -- \\

\midrule
\textbf{InSight-doc ($r=0.25$)} &
\textbf{Qwen3-VL} &
\textbf{8B} &
\cmark & \cmark & \cmark & \cmark &
\hphantom{$^\dagger$}\textbf{70.1}$^\dagger$ & \hphantom{$^\dagger$}\textbf{83.4}$^\dagger$ & \textbf{50.0} & \textbf{57.3} \\

\textbf{InSight-doc ($r=0.35$)} &
\textbf{Qwen3-VL} &
\textbf{8B} &
\cmark & \cmark & \cmark & \cmark &
\hphantom{$^\dagger$}\textbf{72.1}$^\dagger$ & \hphantom{$^\dagger$}\textbf{86.3}$^\dagger$ & \textbf{55.1} & \textbf{63.2} \\

\textbf{InSight-doc ($r=0.5$)} &
\textbf{Qwen3-VL} &
\textbf{8B} &
\cmark & \cmark & \cmark & \cmark &
\hphantom{$^\dagger$}\textbf{73.8}$^\dagger$ & \hphantom{$^\dagger$}\textbf{87.6}$^\dagger$ & \textbf{57.8} & \textbf{65.6} \\

\textbf{InSight-doc ($r=0.7$)} &
\textbf{Qwen3-VL} &
\textbf{8B} &
\cmark & \cmark & \cmark & \cmark &
\hphantom{$^\dagger$}\textbf{73.8}$^\dagger$ & \hphantom{$^\dagger$}\textbf{88.2}$^\dagger$ & \textbf{58.5} & \textbf{67.0} \\

\bottomrule
\end{tabular}
\end{table*}

Table~\ref{tab:appendix_related_full} provides an expanded comparison with recent visual-retrieval, RAG, and coarse-to-fine document reasoning methods.
Although these methods are evaluated on overlapping long-document benchmarks, their reported numbers are not directly comparable because they differ in several important settings.
In particular, DUDE and MP-DocVQA numbers are not strictly metric-aligned across papers, since many prior methods report ANLS whereas we use LLM-as-judge accuracy for consistency across extractive, reasoning, and unanswerable questions.

Visual RAG methods, including ColPali (w/ GPT-4o)~\citep{faysse2024colpali},
CREAM~\citep{zhang2024cream},
M3DocRAG~\citep{cho2024m3docrag}, VisRAG~\citep{yu2024visrag}, SV-RAG~\citep{chen2024sv}, VDocRAG~\citep{tanaka2025vdocrag}, MoLoRAG~\citep{wu2025molorag}, and URaG~\citep{shi2026urag} , primarily reduce context by selecting relevant pages, document images, or text chunks before answer generation.
Their central question is which evidence units should be retrieved.
This differs from our setting, where the full document is already available at low resolution and the model decides which sub-page regions require higher visual resolution during reasoning.

Iterative evidence-acquisition methods such as Doc-React~\citep{wu2025doc}, VRAG-RL~\citep{wang2026vrag}, and MM-Doc-R1~\citep{lin2026mm} are closer to our agentic setting.
Doc-React iteratively refines retrieval queries, while MM-Doc-R1 uses a hybrid agent setup with a Qwen3~\citep{yang2025qwen3} reasoning backbone and a Qwen2.5-VL~\citep{bai2025qwen2.5vl} reader.
VRAG-RL is the closest among these methods in that it can perform multi-step visual evidence acquisition with search, crop, and scale actions.
However, it is still formulated as a visual RAG/search-agent pipeline, whereas InSight-doc is retriever-free: it starts from a low-resolution view of the full target document and dynamically zooms into high-resolution sub-page regions during reasoning.

CogDoc~\citep{xu2025cogdoc}, DocR1~\citep{xiong2025docr1}, DocSeeker~\citep{yan2026docseeker}, and Doc-V$^{\star}$~\citep{zheng2026docvstar}, are the closest coarse-to-fine, structured, or agent-style document reasoning methods.
CogDoc and DocR1 encourage evidence-page identification before answering, DocSeeker produces structured Analysis--Localization--Reasoning traces with page-level evidence, and Doc-V$^{\star}$ fetches high-resolution pages ($1024\times768$ pixels/page; equivalent to $r\approx0.13$) after a coarse overview ($256\times256$ pixels/page; equivalent to $r\approx0.46$).
Doc-V$^{\star}$ mostly relies on an external retriever to fetch the pages.

Finally, these methods differ in backbones, training data, preprocessing, input resolution, page limits, retrieval method, tool budgets, and evaluation code.
Therefore, we use official cross-paper numbers for positioning only.
A fully controlled comparison would require rerunning each method under the same backbone, document preprocessing, resolution, page/tool budget, and evaluation script, which is not possible for the closest unreleased methods such as CogDoc and Doc-V$^{\star}$.

\subsection{Further Comparison with Doc-V$^{\star}$}

Doc-V$^{\star}$~\citep{zheng2026docvstar} is the closest related method to \ourmethod in spirit, since it also starts from a low-resolution document overview and interleaves evidence acquisition with reasoning.
However, a fully controlled empirical comparison is not currently feasible because \emph{Doc-V$^{\star}$ has not released its training dataset, model checkpoints, or training/evaluation code}.
More importantly, Doc-V$^{\star}$ studies a different setting: its agent has access to an external retriever, ColQwen2.5~\citep{faysse2024colpali}, whereas InSight-doc is an end-to-end agent that decides where to zoom without any external page retriever.

This difference is substantial in practice.
As reported in Table~5 of the Doc-V$^{\star}$ paper, 94.0--99.8\% of its trajectories call the external retriever, while only 3.4--14.7\% involve page-level retrieval by the agent itself from the document overview.
Without the retriever, its accuracy drops from 39.8\% to 34.9\% on MMLongBench-Doc (see Table 4 of the Doc-V$^{\star}$ paper).
In contrast, InSight-doc makes no external retriever calls, and 81.6--99.3\% of its trajectories involve region-level evidence acquisition by the agent itself.
\textit{Thus, the main distinction is not only region-level versus page-level evidence, but also end-to-end visual evidence acquisition versus retrieval-assisted interaction.}

For a more controlled comparison, we conduct a controlled ColPali-style experiment to further approximate Doc-V$^{\star}$'s retrieval-assisted page-level interaction.
We use the same external retriever as Doc-V$^{\star}$, namely ColQwen2.5, a ColPali variant based on Qwen2.5-VL-3B-Instruct.
We retrieve large page sets with $K \in \{8,16,32\}$ to mimic the evidence available after multiple retrieval turns, and feed the retrieved pages into Qwen3-VL-8B-Instruct, the same base model used by InSight-doc-8B.
Although this is still not a fully apples-to-apples comparison with Doc-V$^{\star}$, it controls the answer-generation backbone and directly tests retrieval-assisted page-level evidence selection.

The results are shown in Table~\ref{tab:static-retrieval}. At all comparable budget levels, InSight-doc achieves higher accuracy with fewer input tokens.
These results suggest that end-to-end region-level adaptive zooming is at least competitive with, and likely more token-efficient than, retrieval-assisted page-level interaction used by concurrent RAG-based methods such as Doc-V$^{\star}$.

\subsection{Additional Performance Analysis}

On the longest-document subset, MMLongBench-Doc increases from 47.7 to 71.0 pages, and LongDocURL from 89.0 to 106.5 pages. Increasing page
count reduces accuracy and increases sequence length for both Qwen3-VL-8B without tools and \ourmethod-8B. Nevertheless, \ourmethod-8B remains consistently stronger as shown in Figure~\ref{fig:expriment_longest}. Averaged over MMLongBench-Doc and LongDocURL, its gains over Qwen3-VL-8B on the longest-document subset are +11.6, +6.4, and +6.4 points at $r=0.25,0.35,0.5$, respectively.

Compared with the results on the full evaluation set, the accuracy-efficiency tradeoff becomes slightly more favorable for \ourmethod-8B on longer documents. On the full set, \ourmethod-8B at $r=0.35$ exceeds Qwen3-VL-8B at $r=0.7$ in macro accuracy, 59.1\% vs. 57.5\%, while reducing latency from 29.6s to 9.3s
and sequence length from 111.7k to 33.4k tokens. On the longest-document subset, this comparison improves to 56.2\% vs. 53.2\%, with latency reduced from 39.3s to 11.2s and sequence length from 136.8k to 42.4k tokens. Thus, as page count increases, targeted inspection
preserves or improves accuracy while avoiding most of the cost of uniformly increasing full-document resolution.

\begin{figure}[t]
    \vspace{1mm}
    \centering
    \includegraphics[width=\linewidth]{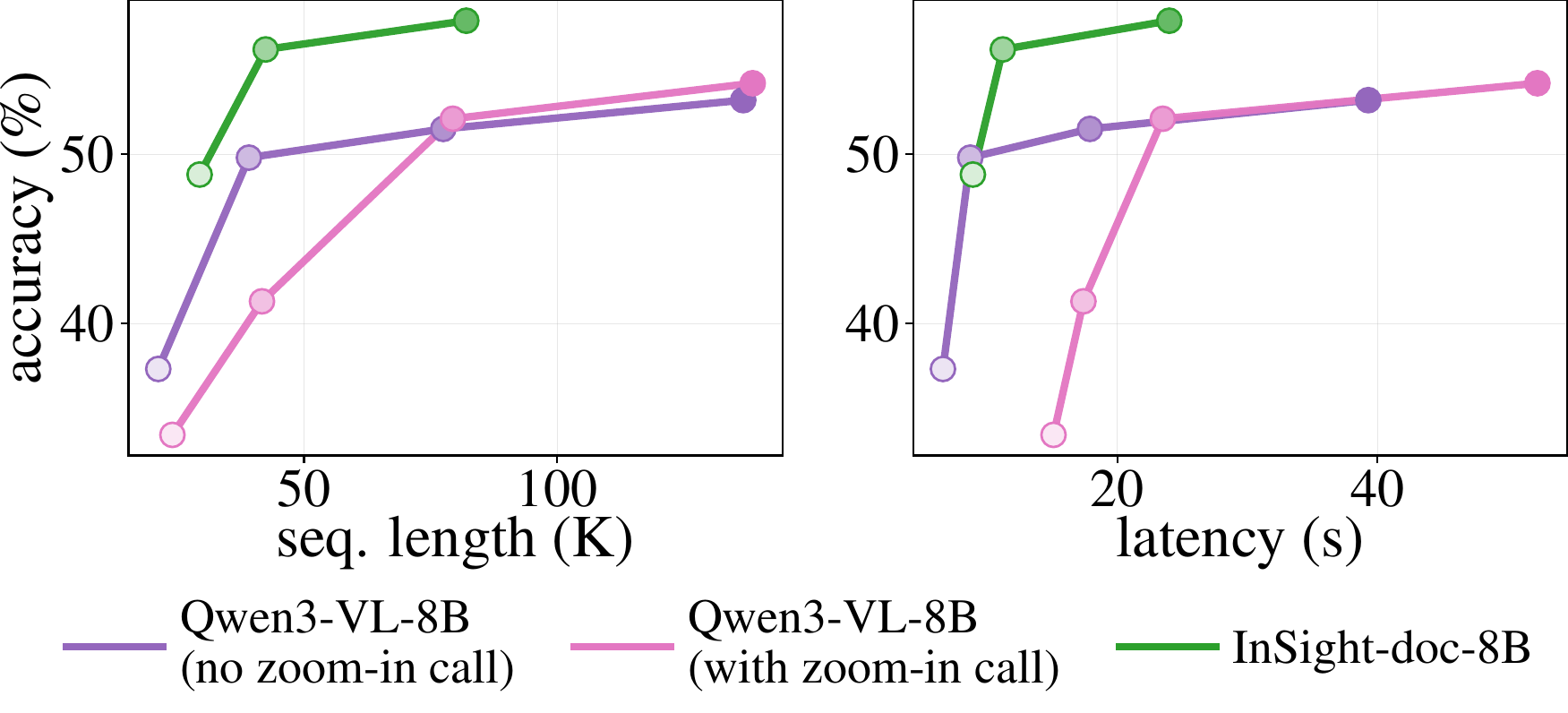}   
    \vspace{-5mm}
    \caption{
\textbf{Accuracy vs.\ efficiency on the longest-document examples} from MMLongBench-Doc and LongDocURL (200 examples each).
InSight-doc-8B achieves higher accuracy at substantially lower time and token cost than both baselines.
}
    \label{fig:expriment_longest}
\end{figure}

\section{\ourmethod Output Examples}
\label{app:example_appendix}

More inference examples of \ourmethod are provided in Figures \ref{fig:teaser5},  \ref{fig:teaser3}, \ref{fig:teaser4-1}, \ref{fig:teaser4-2}, and \ref{fig:teaser2}.

\section{Information About Use of AI Assistants}
We used AI assistants (e.g., GPT and Gemini) for literature review, manuscript polishing, generating codes, and launching experiments.

\begin{table*}[t]
\centering
\caption{
Controlled proxy comparison with retrieval-assisted page-level evidence selection on the 200 longest documents from each of MMLongBench-Doc and LongDocURL.
The ColQwen2.5 baseline retrieves the top-$K$ pages and feeds them to Qwen3-VL-8B-Instruct, approximating the external-retriever setting used by Doc-V$^{\star}$ while controlling the answer-generation backbone.  
}
\label{tab:static-retrieval}
\fontsize{8.5}{10}\selectfont
\begin{tabular}{llcc}
\toprule
\textbf{Method} & \textbf{Setting} & \textbf{Acc. (\%)} & \textbf{Avg. Tokens} \\
\midrule
Qwen3-VL-8B + ColQwen2.5 & Top-8 pages  & 43.4 & $\sim$28K \\
InSight-doc-8B           & 50 DPI       & 48.8 & $\sim$22K \\
\midrule
Qwen3-VL-8B + ColQwen2.5 & Top-16 pages & 49.1 & $\sim$57K \\
InSight-doc-8B           & 70 DPI       & 56.2 & $\sim$41K \\
\midrule
Qwen3-VL-8B + ColQwen2.5 & Top-32 pages & 53.5 & $\sim$104K \\
InSight-doc-8B           & 100 DPI      & 57.9 & $\sim$81K \\
\bottomrule
\end{tabular}
\end{table*}

\begin{figure*}[t]
\centering
\includegraphics[width=\linewidth]{figure/appendix/nonanswer.pdf}  
\caption{Example of \ourmethod{} on an unanswerable question, i.e., one whose answer cannot be supported by any evidence in the document.}
\label{fig:teaser5}
\end{figure*}

\begin{figure*}[t]
    \centering
    \includegraphics[width=\linewidth]{figure/appendix/zoomin1.pdf}

    \vspace{1cm} 
    
    \includegraphics[width=\linewidth]{figure/appendix/zoomin2.pdf}
\caption{Examples illustrating \ourmethod's sequential zoom-in behavior, where it progressively refines target regions until sufficient evidence is collected. The bottom example, taken from MME-RealWorld-Lite, shows that \ourmethod{} also generalizes to natural-image visual search despite being trained solely on document data.}
    \label{fig:teaser3}
\end{figure*}

\begin{figure*}[t]
    \centering     \includegraphics[width=\linewidth]{figure/appendix/humanlike.pdf}     
\caption{An example demonstrating that \ourmethod can think and see in a manner similar to humans:adjusting the focus when the initially identified region is imprecise. }
    \label{fig:teaser4-1}
\end{figure*}

\begin{figure*}[t]
    \centering
    \includegraphics[width=\linewidth]{figure/appendix/humanlike2.pdf}    
\caption{An example demonstrating that \ourmethod can think and see in a manner similar to humans: actively exploring potential regions that may contain the answer. }
    \label{fig:teaser4-2}
\end{figure*}

\begin{figure*}[t]
    \centering   
    \includegraphics[width=1.\linewidth]{figure/appendix/crosspage2.pdf}  
    
     \vspace{1cm} 
     
    \includegraphics[width=\linewidth]{figure/teaser_0527.pdf} 
       \caption{Examples demonstrating that \ourmethod can gather evidence from different pages.}
    \label{fig:teaser2}
\end{figure*}

\end{document}